\newtheorem{theorem}{Theorem}[section]
\newtheorem{definition}{Definition}
\newtheorem{remark}[theorem]{Remark}
\newcommand{\EE}{\mathbb{E}}
\newcommand{\tr}{\textnormal{Tr}\,}
\newcommand{\diag}{\textnormal{diag}}
\newcommand{\Diag}{\textnormal{Diag}\,}
\newcommand{\argmin}{\mathop{\rm argmin}}
\newcommand{\XCal}{\mathcal{X}}
\newcommand{\diam}{\textnormal{diam}}
\newcommand{\dom}{\text{dom}}
\newcommand{\br}{\mathbb{R}}
\newcommand{\ba}{\begin{array}}
\newcommand{\ea}{\end{array}}
\newcommand{\bp}{\mathbb{P}}
\newcommand{\bq}{\mathbb{Q}}
\newcommand{\etal}{{\it et al.\ }}
\title{Sparsemax and Relaxed Wasserstein for Topic Sparsity}
 \author{Tianyi Lin}
 \affiliation{
 \institution{University of California, Berkeley}
 \city{Berkeley}
 \state{California}
 }
 \email{darren\_lin@berkeley.edu}
 \author{Zhiyue Hu}
 \affiliation{
 \institution{University of California, Berkeley}
 \city{Berkeley}
 \state{California}
 }
 \email{zyhu95@berkeley.edu}
 \author{Xin Guo}
 \affiliation{
 \institution{University of California, Berkeley}
 \city{Berkeley}
 \state{California}
 }
 \email{xinguo@berkeley.edu}
\begin{document}

\begin{abstract}
Topic sparsity refers to the observation that individual documents usually focus on several salient topics instead of covering a wide variety of topics, and a real topic adopts a narrow range of terms instead of a wide coverage of the vocabulary. Understanding this topic sparsity is especially important for analyzing user-generated web content and social media, which are featured in the form of extremely short posts and discussions. As topic sparsity of individual documents in online social media increases, so does the difficulty of analyzing the online text sources using traditional methods.

In this paper, we propose two novel neural models by providing sparse posterior distributions over topics based on the Gaussian sparsemax construction, enabling efficient training by stochastic backpropagation. We construct an inference network conditioned on the input data and infer the variational distribution with the relaxed Wasserstein (RW) divergence. Unlike existing works based on Gaussian softmax construction and Kullback-Leibler (KL) divergence, our approaches can identify latent topic sparsity with training stability, predictive performance, and topic coherence. Experiments on different genres of large text corpora have demonstrated the effectiveness of our models as they outperform both probabilistic and neural methods. 
\end{abstract}


\terms{Algorithms, Experimentation, Performance}

\keywords{Topic sparsity; neural topic modeling; sparsemax; relaxed Wasserstein divergence; stochastic gradient backpropagation}

\maketitle

\section{Introduction}
Social networks have become integral components of the web. According to Cisco Systems, the number of active websites surpassed one billion in 2016, up from approximately 700 million in 2012\footnote{http://en.wikipedia.org/wiki/user-generated content}. In a typical social network platform such as Twitter, the micro-blogging service is averaged at 335 million monthly active users in 2018, more than twice as many as in 2012\footnote{https://www.statista.com/statistics/282087/number-of-monthly-active-twitter-users/}. The huge amount of user-generated content, normally in the form of very short text, contains rich information that is barely found in traditional text sources yet is important for social media event detection, sentiment analysis, personalized recommendation, among others. Therefore, analyzing large-scale user-generated content in social media has been an emerging research direction.

One of the main challenges is to understand the topic sparsity in short text: different from carefully-edited articles, user-generated content in social media is extremely short with a very large vocabulary and a broad range of topics  \cite{Hong-2010-Empirical, Zhao-2011-Comparing}. Consequently, probabilistic topic models \cite{Hofmann-1999-Probabilistic, Blei-2003-Latent}  have experienced mixed results, despite their broad success on traditional media. Recent effort on sparsity-enhanced topic models yields limited success due to the complicated procedure to infer topic sparsity on large-scale text corpora \cite{Shashanka-2008-Sparse, Wang-2009-Decoupling, Zhu-2011-Sparse, Eisenstein-2011-Sparse, Chen-2012-contextual, Lin-2014-Dual, Xu-2014-Latent, Lin-2016-Understanding}. The latest development on topic modeling is to incorporate the deep neural networks with either the generative process \cite{Hinton-2009-Replicated, Larochelle-2012-Neural, Cao-2015-Novel, Card-2017-Neural, Peng-2018-Neural} or the inference method \cite{Kingma-2014-Auto, Rezende-2014-Stochastic, Mnih-2014-Neural, Miao-2016-Neural, Miao-2017-Discovering, Srivastava-2017-Autoencoding}. Compared to traditional inference methods \cite{Jordan-1999-Introduction, Hoffman-2013-Stochastic}, this approach is more efficient and more accurate with the training based on backpropagation; it is also more adaptive to infer new models given a simple declarative specification of the generative process. 
However, all existing neural approaches are based on the Kullback-Leibler (KL) divergence which is not suitable for inferring topic sparsity. Indeed, as the true distribution is sparse, or in other words, supported on a low dimensional manifold, KL divergence has shown to be \textit{unsuitable} and contributing to the instability of training \cite{Arjovsky-2017-Towards}. 

In this paper, we propose two new neural models, namely Neural SparseMax Document and Topic Models (NSMDM and NSMTM), which apply the ``sparsemax" model of attention \cite{Martins-2016-Softmax} to induce the topic sparsity. To efficiently infer the topic sparsity from large-scale text corpora, we design a new neural variational inference framework based on the relaxed Wasserstein (RW) divergence \cite{Guo-2017-Relaxed}. The proposed approach is shown to outperform all existing methods in terms of the quality of reconstruction while maintaining the stability of training. Moreover, the training and testing is much faster than traditional methods on large-scale text corpora. 

To the best of our knowledge, these are the first deep neural document and topic models that efficiently identify topic sparsity from online social media. Experiments on different genres of large-scale text corpora demonstrate that NSMDM and NSMTM address sparsity in both document-topic and topic-word structure of text corpora, and consistently outperform other competing methods on large-scale short text corpora, in terms of training stability, predictive performance, and topic coherence.

The rest of the paper is organized as follows. Section~\ref{sec:related-work} lists several related work and discusses their relationships with our models, Section~\ref{sec:definition}
defines the problem of modeling topic sparsity in text corpora, Section~\ref{sec:model} introduces the Neural SparseMax Document and Topic Model (NSMDM and NSMTM), and the inference framework based on the RW divergence,  Section~\ref{sec:experiment} describes the experiments on different genres of large-scale short text corpora,
and Section~\ref{sec:conclusion} concludes.

\section{Related Work}\label{sec:related-work}

\subsection{Probabilistic Topic Models}
Probabilistic topic models have been one of the most successful approaches for unsupervised learnings. Without utilizing auxiliary information such as higher-level context, these models generate each document from a mixture of topics where each topic is defined as a unigram distribution over all the terms in the vocabulary. While classical topic models, such as probabilistic latent semantic analysis (PLSA) \cite{Hofmann-1999-Probabilistic} and latent Dirichlet allocation (LDA) \cite{Blei-2003-Latent} have enjoyed broad success on traditional media texts, their success on social media texts is limited. This limitation inspires a line of works on sparsity-enhanced topic models that address the problem of sparsity in document-topic and topic-term distributions. While some of these models apply the non-negative matrix factorization \cite{Hoyer-2004-Matrix} and topical coding \cite{Zhu-2011-Sparse, Zhang-2013-Sparse} with $\ell_1$-regularization to induce sparse posterior distribution, the result on tweets is still mixed \cite{Lin-2014-Dual}. Another category of sparsity-enhanced models improves classical models by adopting specific prior, such as an entropic prior \cite{Shashanka-2008-Sparse}, a spike and slab prior \cite{Wang-2009-Decoupling, Lin-2014-Dual}, and a zero-mean Laplace prior \cite{Eisenstein-2011-Sparse}, to decouple across-data prevalence and within-data proportion in modeling mixed membership data.  These models enjoy both effective structures and efficient inference from exploiting conjugacy with either Monte Carlo or mean-field variational techniques. However, as the expressiveness of these topic models grows, inference methods turn out to be increasingly complicated and intractable on large text corpora. 

\subsection{Neural Topic Models}
Deep neural networks have shown great potential for approximating complicated nonlinear distributions in unsupervised models. The resulting neural models can be efficiently trained by backpropagation \cite{Rezende-2014-Stochastic} while keeping the excellent probabilistic interpretation and the explicit dependence among latent variables. One of the representative categories is the neural document models, such as replicated softmax \cite{Hinton-2009-Replicated}, neural auto-regressive model \cite{Larochelle-2012-Neural}, belief networks \cite{Mnih-2014-Neural}, and neural variational document model \cite{Miao-2016-Neural}. However, these models do not explicitly model latent topics.  

The neural topic models \cite{Cao-2015-Novel, Miao-2017-Discovering, Card-2017-Neural}, on the other hand, directly extend the classical statistical topic models by replacing the \textit{Dirichlet-multinomial} construction in LDA with the Gaussian softmax construction, and significantly improve the expressiveness on large text corpora. However, these models are not able to produce sparse posterior distribution and probabilistic representations of topics, thus fail to address the skewness of the topic mixtures and the word distributions. Peng \etal \cite{Peng-2018-Neural} thus propose a neural sparse topic coding model and show that their approach outperforms sparse topical coding \cite{Zhu-2011-Sparse}. However, the improvement is not significant possibly because the probabilistic representation of topics is lost.

\subsection{Variational Inference}
The basic idea behind the variational inference framework is to learn the posterior distribution by optimizing the divergence between this distribution and a variational distribution. Standard methods for topic models contain \textit{mean-field variational inference} \cite{Hoffman-2013-Stochastic} and \textit{sampling-based varational inference} \cite{Kingma-2014-Auto, Mescheder-2017-Adversarial, Srivastava-2017-Autoencoding}. While the former is model specific and further assumes the conditional independence of latent variables, the latter  only requires very limited and easy-to-compute information from the model and thus is flexible for a variety of models \cite{Srivastava-2017-Autoencoding}.

All the existing inference frameworks in neural topic models are based on the KL divergence, which has shown to be \textit{unsuitable} and contributing to the instability of training \cite{Arjovsky-2017-Towards}. In contrast, the Wasserstein divergence \cite{Villani-2008-optimal} provides a meaningful and smooth representation of the distance in-between even when the true distribution is sparse, yielding a robust training in Generative Adversarial Network (GAN) \cite{Arjovsky-2017-Wasserstein} and Auto Encoder (AE) \cite{Tolstikhin-2018-Wasserstein}. Meanwhile, the RW divergence \cite{Guo-2017-Relaxed}, incorporating the Bregman function into the Wasserstein divergence, speeds up the training of Wasserstein GAN while keeping the stability and robustness.

 \section{Problem Definition}\label{sec:definition}
In this section, we define the problem of modeling topic sparsity. Let $D = \{\vec{w}_j\}_{j=1}^{|D|}$ be a text corpora where $\vec{w}_j=(w_{j1}, \ldots, w_{jn_{j}})$ is a vector of terms representing the textual content of document $j$. Here $w_{ji}$ refers to the frequency of term $i$ in document $j$ and $V$ refers to the vocabulary of distinct words in $D$.
\begin{definition}[Topic, Topical Structure, Topic Modeling] \label{Def:Topic}
A \emph{topic} $\vec{\phi}$ in a document collection $D$ is defined as a multinomial distribution over the vocabulary $V$ such that 
\[\bp(v=i \mid \vec{\phi}) = \phi_i, \qquad i=1,2,\ldots,\left|V\right|, \]
where $\left|V\right|$ denotes the size of the vocabulary. 

Similarly, the \emph{topical structure} $\vec{\theta}$ of a document is  defined as a multinomial distribution over $K$ topics such that 
\[\bp(\vec{\phi}=\vec{\phi}_k \mid \vec{\theta}) = \theta_k, \qquad k=1,2,\ldots,K,\]
where $K$ is the total number of topics contained in $D$, 

Given  a text corpus $D$,  \emph{topic modeling} aims to learn a set of salient topics and the topical structure of all documents,  $\{\vec{\phi}_k\}_{k=1}^K$ and $\{\vec{\theta}_j\}_{j=1}^{|D|}$.

\end{definition}

\begin{definition}[Topic Sparsity] \label{Def:Topic_Sparsity}
Topic sparsity means that individual documents usually focus on several salient topics instead of covering a wide variety of topics, and a real topic also adopts a narrow range of terms instead of a wide coverage of the vocabulary. That is, 
\begin{eqnarray*}
& 1 \leq \sum_{k=1}^K \text{1}_{(\theta_{jk}>0)} \ll K, & j=1,2,\ldots,\left|D\right|, \\
& 1 \leq \sum_{i=1}^{|V|} \text{1}_{(\phi_{ki}>0)} \ll |V|, & k=1,2,\ldots,K.
\end{eqnarray*}

\end{definition}
Most Bayesian topic models, such as LDA \cite{Blei-2003-Latent}, adopt the Dirichlet prior for both topics and the topic structure of documents. That is, 
\begin{eqnarray*}
& \vec{\theta}_j \sim \text{Dirichlet}\left(\vec{\alpha}\right), & j=1,\cdots, \left|D\right|, \\
& \vec{\phi}_k \sim \text{Dirichlet}\left(\vec{\beta}\right), & k=1,\cdots, K. 
\end{eqnarray*}
The Dirichlet prior alleviates the overfitting problem of PLSA \cite{Hofmann-1999-Probabilistic} in practice by smoothing the topic mixture in individual documents and the word distribution of each topic. Neural topic models, such as GSM \cite{Miao-2017-Discovering}, adopt the Gaussian softmax construction for both topics and the topic structure of documents, i.e., 
\begin{eqnarray*}
\vec{x} \sim \text{Gaussian}\left(0, I_d\right), & \vec{\theta}_j = \text{softmax}\left(W^\top\vec{x}\right), & j=1,\cdots, \left|D\right|, \\
& \vec{\phi}_k =\text{softmax}\left(S^\top\vec{t}_k\right), & k=1,\cdots, K. 
\end{eqnarray*}
The Gaussian softmax construction is simple to evaluate and differentiate, enabling the efficient implementation of stochastic backpropagation \cite{Martins-2016-Softmax}. However, neither the Dirichlet prior nor the Gaussian softmax construction is suitable for modeling topic sparsity (Definition~\ref{Def:Topic_Sparsity}) since they do not formally control the posterior sparsity of the inferred topical structure as discussed earlier.

Given a collection of documents $D$, the vocabulary $V$ and the number of topics $K$, the major task of topic sparsity modeling can be defined as
\begin{enumerate} 
\item inferring the sparse topic proportion of document $j$, i.e., $\vec{\theta}_j$;
\item inferring the sparse word usage of topic $k$, i.e., $\vec{\phi}_k$.
\end{enumerate}
All the notations used in this paper are listed in Table~\ref{Tab:Notation}.
\begin{table}[t]
\centering
\caption{Variables in Neural Topic Modeling} \label{Tab:Notation}
\vspace{-1em}
\begin{tabular}{|r|l|} \hline
\textbf{Notation}& \textbf{Definition}\\ \hline
$K$ & number of topics \\ \hline
$V$ & vocabulary\\ \hline
$D$ & a collection of documents\\ \hline
$N_j$ & length of document $j$\\ \hline
$w_{ji}$ & word $i$ in document $j$\\ \hline
$\vec{w}$ & a set of all words, i.e., $\left\{\vec{w}_j\right\}_{j=1}^{|D|}$\\ \hline
$z_{ji}$ & assigned topic at $i$th word in document $d$\\ \hline
$\vec{z}$ & a set of all topic assignments, i.e., $\{\vec{z}_{j}\}_{j=1}^{|D|}$ \\ \hline
$\vec{\theta}_j$ & topical structure of document $j$\\ \hline
$\vec{\phi}_{k}$ & word usage of topic $k$\\ \hline
$\vec{\phi}$ & a dictionary matrix $\in \br^{K\times V}$\\ \hline
$\vec{\psi}_j$ & word structure of document $j$\\ \hline
$\left(\mu_0, \sigma_0^2\right)$ & hyper-parameters for the Gaussian prior \\ \hline
$\gamma$ & regularization parameter \\ \hline
$\bp$, $\bq$ & probability distributions \\ \hline
$\left(\XCal, \Sigma\right)$ & a measurable space \\ \hline
$\text{Gaussian}(\cdot)$ & Gaussian distribution\\ \hline
$\text{Multinomial}(\cdot)$ & Multinomial distribution\\ \hline
$\diam(\cdot)$ & a diameter of a set \\ \hline
$\dom(\cdot)$ & a domain of a function \\ \hline
$\text{1}(\cdot)$ & Indicator function \\ \hline
$\left\|\cdot\right\|$ & $\ell_2$ norm \\ \hline 
$\tr(\cdot)$ & the trace of a matrix. \\ \hline
$\log(\cdot)$ & the natural logarithm. \\ \hline 
\end{tabular} \vspace{-2em}
\end{table}

\section{Methodology}\label{sec:model}
Topic sparsity is the common observation in online social media, such as Twitter and Facebook. It is challenging for the recently proposed neural topic models in identifying the sparse structure of documents and topics. To address this problem, we propose to induce sparsity by replacing the Gaussian softmax construction by the Gaussian sparsemax construction in the generative network. More specifically, we introduce two new neural models, Neural SparseMax Document and Topic Models (NSMDM and NSMTM), where the generative process is inspired by the \textit{sparsemax} model of attention \cite{Martins-2016-Softmax}. Meanwhile, to make the inference network work, we use the RW divergence to approximate the posterior by the variational distribution. Combined, our approaches model sparse document-topic and topic-term distributions effectively and infer this sparsity from large-scale text corpora efficiently. 

\subsection{Generative Network}
We describe the generative process of $\vec{\theta}$ and $\vec{\phi}$ in our NSMDM and NSMTM models.  $\vec{\theta}$ and $\vec{\phi}$ are both generated from the Gaussian sparsemax construction. As a result, $\vec{\theta}$ and $\vec{\phi}$ are sparse since the projection is likely to hit the boundary of the simplex. 

\noindent\textbf{NSMDM:} The model is depicted in Figure~\ref{Figure:NSMDM} and its generative process is presented as follows: \\

\noindent For each topic indexed by $k \in \{1,2,\ldots,K\}$:
\begin{enumerate}
\item the topic distribution $\vec{\phi}_{k}=S^\top\vec{t}_k$.
\end{enumerate}
For document indexed by $j\in\{1,2,\ldots,|D|\}$:
\begin{enumerate}
\item $\vec{x}_j\sim$ Gaussian$(\mu_0, \sigma_0^2)$;
\item the topic proportion $\vec{\theta}_j=\text{sparsemax}(W^\top\vec{x}_j)$;
\item the word distribution $\vec{\psi}_j = \text{softmax}(\vec{\phi}^\top\vec{\theta}_j)$; 
\item For each word indexed by $i \in \{1,2,\ldots,N_j\}$:
\begin{enumerate}
\item sample $w_{ji}$ from Multinomial$\left(\vec{\psi}_j\right)$. 
\end{enumerate}
\end{enumerate}
\begin{figure}[!ht]
\centering
\begin{tikzpicture}
\tikzstyle{main}=[circle, minimum size = 0.1mm, thick, draw =black!80, node distance = 8mm]
\tikzstyle{connect}=[-latex, thick]
\tikzstyle{box}=[rectangle, draw=black!100]
\node[main, fill = white!100] (x) {x};
\node[main,fill = black!10] (mu) [above =of x] {$\mu_0$};
\node[main,fill = black!10] (sigma) [left=of x] {$\sigma_0$};
\node[main] (theta) [right=of x]{$\theta$};
\node[main, fill = black!10] (w) [right=of theta]{w};
\node[main] (phi) [right=of w]{$\phi$};
\node[main] (v) [right=of phi]{$v$};
\node[main] (t) [above =of phi] {$t$};
\path (mu) edge [connect] (x)
         (sigma) edge [connect] (x)
         (x) edge [connect] (theta)
         (theta) edge [connect] (w)
         (v) edge [connect] (phi)
         (t) edge [connect] (phi)
         (phi) edge [connect] (w);
\node[rectangle, inner sep=-1.2mm, fit= (w), label=below right:\tiny $N_j$, xshift=-1mm] {};
\node[rectangle, inner sep=2mm, draw=black!100, fit= (w)] {};
\node[rectangle, inner sep=1.7mm, fit=(w), label=below right: \tiny $|D|$, xshift=-1.5mm] {};
\node[rectangle, inner sep=4.5mm, draw=black!100, fit = (x)(theta) (w)] {};
\node[rectangle, inner sep=-0.5mm, fit= (phi), label=below right:\tiny $K$, xshift=-0.5mm] {};
\node[rectangle, inner sep=2.5mm, draw=black!100, fit= (phi)] {};
\end{tikzpicture}
\vspace{-.5em} \caption{The generative process of NSMDM} \label{Figure:NSMDM} \vspace{-.5em}
\end{figure}
\textbf{NSMTM:} The model is depicted in Figure~\ref{Figure:NSMTM} and the generative network is presented as follows: \\

\noindent For each topic indexed by $k \in \{1,2,\ldots,K\}$:
\begin{enumerate}
\item the topic distribution $\vec{\phi}_{k}=\text{sparsemax}(S^\top\vec{t}_k)$.
\end{enumerate}
For document indexed by $j\in\{1,2,\ldots,|D|\}$:
\begin{enumerate}
\item $\vec{x}_j\sim$ Gaussian$(\mu_0, \sigma_0^2)$;
\item the topic proportion $\vec{\theta}_j=\text{sparsemax}(W^\top\vec{x}_j)$;
\item For each word indexed by $i \in \{1,2,\ldots,N_j\}$:
\begin{enumerate}
\item sample $z_{ji}$ from Multinomial$(\vec{\theta}_j)$;
\item sample $w_{ji}$ from Multinomial$(\vec{\phi}_{z_{ji}})$. 
\end{enumerate}
\end{enumerate}
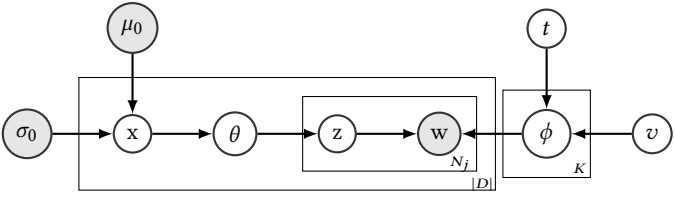
\begin{figure}[!ht]
\centering
\begin{tikzpicture}
\tikzstyle{main}=[circle, minimum size = 0.1mm, thick, draw =black!80, node distance = 8mm]
\tikzstyle{connect}=[-latex, thick]
\tikzstyle{box}=[rectangle, draw=black!100]
\node[main, fill = white!100] (x) {x};
\node[main,fill = black!10] (mu) [above =of x] {$\mu_0$};
\node[main,fill = black!10] (sigma) [left=of x] {$\sigma_0$};
\node[main] (theta) [right=of x]{$\theta$};
\node[main] (z) [right=of theta]{z};
\node[main, fill = black!10] (w) [right=of z]{w};
\node[main] (phi) [right=of w]{$\phi$};
\node[main] (v) [right=of phi]{$v$};
\node[main] (t) [above =of phi] {$t$};
\path (mu) edge [connect] (x)
         (sigma) edge [connect] (x)
         (x) edge [connect] (theta)
         (theta) edge [connect] (z)
         (z) edge [connect] (w)
         (v) edge [connect] (phi)
         (t) edge [connect] (phi)
         (phi) edge [connect] (w);
\node[rectangle, inner sep=-1.2mm, fit= (z) (w),label=below right:\tiny $N_j$, xshift=5mm] {};
\node[rectangle, inner sep=2mm, draw=black!100, fit= (z) (w)] {};
\node[rectangle, inner sep=1.7mm, fit=(z) (w), label=below right: \tiny $|D|$, xshift=5mm] {};
\node[rectangle, inner sep=4.5mm, draw=black!100, fit = (x)(theta)(z) (w)] {};
\node[rectangle, inner sep=-0.5mm, fit= (phi), label=below right:\tiny $K$, xshift=-0.5mm] {};
\node[rectangle, inner sep=2.5mm, draw=black!100, fit= (phi)] {};
\end{tikzpicture}
\vspace{-.5em} \caption{The generative process of NSMTM} \label{Figure:NSMTM} \vspace{-.5em}
\end{figure}
We make the following comments on the \textit{sparsemax} construction.
\begin{itemize}
\item \textbf{Idea.}  It is necessary to understand the rationale behind the \textit{sparsemax} construction. Previous work \cite{Miao-2017-Discovering} has found it reasonable to use the Gaussian softmax construction to define both document-topic and topic-term distributions. However, the Gaussian softmax construction only induces the sparsity when some of the input vectors approach infinity. Specifically, a softmax function is defined as
\[ \left[\text{softmax}(\vec{x})\right]_j := \frac{e^{-x_j}}{\sum_{j=1}^n e^{-x_j}}, \]
implying that $\left[\text{softmax}(\vec{x})\right]_j\approx 0$ when $x_j$ tends to infinity. In contrast, Gaussian sparsemax construction can produce sparse probability distribution, given by
\begin{equation}\label{Def:Sparsemax}
\text{sparsemax}(\vec{x}) := \argmin_{\vec{p}\in\Delta^{d-1}} \ \|\vec{p}-\vec{x}\|_2^2, 
\end{equation}
where $\Delta^{d-1} := \left\{\vec{p}\in\br^d \mid \sum_{j=1}^d p_j = 1, \ \vec{p} \geq 0\right\}$. 
\item \textbf{Construction.} The sparsemax construction is simple to evaluate while keeping most of the appealing properties of the softmax construction \cite{Martins-2016-Softmax}. In fact, the solution to~\eqref{Def:Sparsemax} is of the form: 
\begin{equation*}
\left[\text{sparsemax}(\vec{x})\right]_j = \max\left\{0, x_j - \tau(\vec{x})\right\}, 
\end{equation*} 
where $\tau:\br^d\rightarrow\br$ is the unique function so that the sum of all $\left[\text{sparsemax}(\vec{x})\right]_j$ is 1 for any $\vec{x}\in\br^d$. More specifically, let $x_{(1)} \geq x_{(2)} \geq \ldots \geq x_{(d)}$ be the sorted coordinates of $\vec{x}$ and $T(\vec{x})$ be the maximum number of $k$ that $1+kx_{(k)}>\sum_{j\leq k} x_{(j)}$, then 
\[ \tau(\vec{x}) = \frac{\sum_{j \leq T(\vec{x})} x_{(j)}-1}{T(\vec{x})} = \frac{\sum_{j \in S(\vec{x})} x_{(j)}-1}{S(\vec{x})}, \]
where $S(\vec{x})$ is the support of $\text{sparsemax}(\vec{x})$, i.e., a set of the indices of nonzero coordinates. Finally, the sparsemax construction is easy to differentiate, with the Jacobian matrix given by
\[ \text{Jacobian}(\vec{x}) = \Diag(s) - \frac{ss^\top}{T(\vec{x})}, \] 
where $s$ is an indicator vector whose $i$th entry is 1 if $i\in S(\vec{x})$ and 0 otherwise. 
\end{itemize}

\subsection{Inference Framework}\label{subsec:inference-network}
In this subsection, we develop a new neural inference method based on the RW divergence.
In addition to the reparameterization tricks \cite{Kingma-2014-Auto} for an unbiased gradient estimation with 
a low variance, we carry out the inference with the RW regularization between variational distribution and the prior.

\textbf{Variational Bound:} For NSMDM, first recall a variational lower bound for the document log-likelihood 
\begin{eqnarray*}
\log\left(p\left(\vec{w} \mid \mu_0, \sigma_0, \vec{\phi}\right)\right) & = &  \log\left(\int_{\vec{\theta}} p\left(\vec{\theta} \mid \mu_0, \sigma_0^2\right) \prod_{j=1}^{|D|} \prod_{i=1}^{N_j} p\left(\vec{w}_{ji} \mid \vec{\phi}, \vec{\theta}_j\right) d\vec{\theta}\right) \nonumber \\
& \geq & \EE_{q(\vec{\theta}\mid \vec{w})}\left[\sum_{j=1}^{|D|}\sum_{i=1}^{N_j} \log\left(p\left(\vec{w}_{ji} \mid \vec{\phi}, \vec{\theta}_j\right)\right)\right] \\
& & - D_{\textsf{KL}}\left[q(\vec{\theta}\mid \vec{w}) \parallel p(\vec{\theta} \mid \mu_0, \sigma_0^2)\right], 
\end{eqnarray*}
where $q(\vec{\theta}\mid \vec{w})$ is the variational distribution approximating the true posterior $p(\vec{\theta}\mid \vec{w})$ and the prior distribution is defined in which $\vec{x}\sim\text{Gaussian}(\mu_0, \sigma_0^2)$ and $\vec{\theta}_j=\text{sparsemax}(W^\top\vec{x}_j)$. The second term is the KL regularization which forces $q(\vec{\theta}\mid \vec{w})$ to be close to $p(\theta\mid\mu_0, \sigma_0^2)$. However, it may result in an unstable training since this term is likely to be infinity if $q(\vec{\theta}\mid \vec{w})$ and $p(\theta\mid\mu_0, \sigma_0^2)$ are supported on different low dimensional manifolds and is therefore not suitable for mining the topic sparsity. In contrast, the RW divergence, the generalization of Wasserstein divergence, can avoid the above issues in KL divergence. We refer the interested reader to \cite{Guo-2017-Relaxed} for more details. 
\begin{definition}[Relaxed Wasserstein divergence]\label{Def:RWD}
The RW divergence between $\bp$ and $\bq$ on $\left(\XCal, \Sigma\right)$ is defined as
\[
W_{D_\varphi} \left(\bp, \bq\right) = \inf_{\pi\in\prod(\bp, \bq)} \int_{\XCal\times\XCal} \ D_\varphi\left(\vec{x}, \vec{y}\right) \ \pi\left(d\vec{x}, d\vec{y}\right), 
\]
where $\prod(\bp, \bq)$ a set of probability distributions with marginal distributions $\bp$ and $\bq$, and 
\begin{equation*}
D_\varphi\left(\vec{x}, \vec{y}\right) = \varphi(\vec{x}) - \varphi(\vec{y}) - (\vec{x}-\vec{y})^\top\nabla\varphi(\vec{x}),
\end{equation*}
and $\varphi$ is a strictly convex function with the $L_\varphi$-Lipschitz continuous gradient, i.e., $\left\|\nabla\varphi(\vec{x}) - \nabla\varphi(\vec{y})\right\|\leq L_\varphi\left\|\vec{x}-\vec{y}\right\|$ for $\vec{x}, \vec{y}\in\dom(\varphi)$. 
\end{definition}
Now, we can derive a new variational bound as 
\begin{eqnarray*}
L_{\textsf{NSMDM}} & = & \EE_{q(\vec{\theta}\mid \vec{w})}\left[\sum_{j=1}^{|D|}\sum_{i=1}^{N_j} \log\left(p\left(\vec{w}_{ji} \mid \vec{\phi}, \vec{\theta}_j\right)\right)\right] \\
& & - \gamma \cdot W_{D_\varphi}\left[q(\vec{\theta}\mid \vec{w}) \parallel p(\vec{\theta} \mid \mu_0, \sigma_0^2)\right] \\
& \approx & \sum_{j=1}^{|D|}\sum_{i=1}^{N_j} \log\left(\text{softmax}(\vec{\phi}^\top\hat{\vec{\theta}}_j)_{w_{ji}}\right) \\
& & - \gamma \cdot W_{D_\varphi}\left[q(\vec{\theta}\mid \vec{w}) \parallel p(\vec{\theta} \mid \mu_0, \sigma_0^2)\right], \quad  \hat{\vec{\theta}}_j \sim q(\vec{\theta}\mid \vec{w}). 
\end{eqnarray*}
Similarly, a new variational lower bound for NSMTM is as follows, 
\begin{eqnarray*}
L_{\textsf{NSMTM}} & = & \EE_{q(\vec{\theta}\mid \vec{w})}\left[\sum_{j=1}^{|D|}\sum_{i=1}^{N_j} \log\left( \sum_{z_{ji}} p\left(\vec{w}_{ji} \mid \vec{\phi}_{z_{ji}}\right) p\left(z_{ji} \mid \vec{\theta}_j\right)\right)\right] \\
& & - \gamma \cdot W_{D_\varphi}\left[q(\vec{\theta}\mid \vec{w}) \parallel p(\vec{\theta} \mid \mu_0, \sigma_0^2)\right] \\
& \approx & \sum_{j=1}^{|D|}\sum_{i=1}^{N_j} \log\left((\vec{\phi}^\top\hat{\vec{\theta}}_j)_{w_{ji}}\right) \\
& & - \gamma \cdot W_{D_\varphi}\left[q(\vec{\theta}\mid \vec{w}) \parallel p(\vec{\theta} \mid \mu_0, \sigma_0^2)\right], \quad \hat{\vec{\theta}}_j \sim q(\vec{\theta}\mid \vec{w}). 
\end{eqnarray*}
Generally speaking, the new variational bound equals to the document log-likelihood when $q(\vec{\theta}\mid \vec{w})=p(\vec{\theta} \mid \mu_0, \sigma_0^2)$ but may not be necessarily smaller if $\gamma=1$. So it is unclear (yet) if this new bound can be a reasonable objective for some proper choices of $\gamma$ in our optimization framework. Fortunately, Theorem~\ref{theorem:Wass-KL} below provides a positive answer by specifying the relationship between the new bound and the original variational bound based on KL divergence. 
\begin{theorem}\label{theorem:Wass-KL}
Given two probability distributions $\bp$ and $\bq$ on $\left(\XCal, \Sigma\right)$,  we have
\begin{equation*}
\frac{1}{L_\varphi\left[\diam(\XCal)\right]^2}W_{D_\varphi}\left(\bp, \bq\right) \leq TV\left(\bp, \bq\right) \leq \sqrt{\frac{1}{2} D_{\textsf{KL}}\left(\bp \parallel \bq\right)},
\end{equation*}
where $L_\varphi>0$ is defined in Definition~\ref{Def:RWD} and the total variation distance is defined as
\[ TV\left(\bp, \bq\right) = \sup_{A\in\Sigma} \ \left|\bp(A) - \bq(A)\right|. \]
\end{theorem}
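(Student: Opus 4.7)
The plan is to separate the chain into two inequalities and handle them independently, since the right one is classical while the left one is the new content tailored to the relaxed Wasserstein construction.

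For the right inequality, I would simply invoke Pinsker's inequality: for any two probability measures on $(\XCal,\Sigma)$, one has $TV(\bp,\bq) \leq \sqrt{\tfrac{1}{2} D_{\textsf{KL}}(\bp\parallel\bq)}$. This is standard and needs no further argument beyond citation.

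For the left inequality, my strategy is to upper bound the Bregman kernel $D_\varphi(x,y)$ pointwise by a multiple of $\|x-y\|^2$ and then use a maximal coupling of $\bp$ and $\bq$. First, using the fundamental theorem of calculus and Cauchy--Schwarz,
\begin{equation*}
|D_\varphi(x,y)| = \left|\int_0^1 \bigl[\nabla\varphi(y+t(x-y)) - \nabla\varphi(x)\bigr]^\top (x-y)\,dt\right| \leq \tfrac{L_\varphi}{2}\,\|x-y\|^2,
\end{equation*}
where the final step uses the $L_\varphi$-Lipschitz continuity of $\nabla\varphi$ from Definition~\ref{Def:RWD}. Since $x,y\in\XCal$, this immediately yields $D_\varphi(x,y)\leq \tfrac{L_\varphi}{2}[\diam(\XCal)]^2$, with $D_\varphi(x,x)=0$.

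Next, I would invoke the maximal coupling characterization of total variation: there exists $\pi^\star\in\prod(\bp,\bq)$ such that $\pi^\star(\{(x,y):x=y\}) = 1 - TV(\bp,\bq)$. Plugging this specific coupling into the infimum definition of $W_{D_\varphi}$, and noting that the integrand vanishes on the diagonal,
\begin{equation*}
W_{D_\varphi}(\bp,\bq) \leq \int_{\XCal\times\XCal} D_\varphi(x,y)\,\pi^\star(dx,dy) \leq \tfrac{L_\varphi}{2}[\diam(\XCal)]^2 \cdot \pi^\star(\{x\neq y\}) = \tfrac{L_\varphi}{2}[\diam(\XCal)]^2 \cdot TV(\bp,\bq),
\end{equation*}
which, after dividing through by $L_\varphi[\diam(\XCal)]^2$, is (in fact strictly stronger than) the stated bound.

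The part I expect to require the most care is the pointwise estimate $D_\varphi(x,y)\leq \tfrac{L_\varphi}{2}\|x-y\|^2$: the Bregman kernel in Definition~\ref{Def:RWD} is written with $\nabla\varphi(x)$ rather than the more conventional $\nabla\varphi(y)$, so one must be careful when applying the descent-lemma-type argument to get a sign-correct bound. Once this estimate is in hand, the rest of the argument is a clean one-line coupling computation, and the existence of the maximal TV-coupling on an arbitrary measurable space $(\XCal,\Sigma)$ is the only remaining technicality, which I would dispatch by citing the standard construction.
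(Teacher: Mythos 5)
Your proposal is correct, and it diverges from the paper only in how the first inequality is handled. The paper's entire proof is a two-line citation: the bound $\frac{1}{L_\varphi[\diam(\XCal)]^2}W_{D_\varphi}(\bp,\bq)\leq TV(\bp,\bq)$ is attributed to Theorem~3.1 of the Relaxed Wasserstein paper \cite{Guo-2017-Relaxed}, and the second inequality to Pinsker \cite{Cover-2012-Elements}. You treat Pinsker identically, but for the left inequality you reconstruct the argument from scratch: the descent-lemma estimate $|D_\varphi(x,y)|\leq\frac{L_\varphi}{2}\|x-y\|^2$ (which checks out --- writing $D_\varphi(x,y)=\int_0^1[\nabla\varphi(y+t(x-y))-\nabla\varphi(x)]^\top(x-y)\,dt$ and using Lipschitzness of $\nabla\varphi$ gives the factor $\int_0^1(1-t)\,dt=\frac12$), followed by plugging the maximal coupling into the infimum defining $W_{D_\varphi}$ so that only the off-diagonal mass $\pi^\star(\{x\neq y\})=TV(\bp,\bq)$ contributes. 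This is essentially the standard proof of the cited result, so you are not taking a conceptually different road so much as supplying the road the paper outsources; what it buys you is self-containedness, an explicit identification of where boundedness of $\XCal$ and the Lipschitz constant enter, and in fact a constant sharper by a factor of $2$ than the stated bound. Your two flagged caveats are the right ones: the paper's $D_\varphi$ is written with $\nabla\varphi(x)$ rather than the conventional $\nabla\varphi(y)$ (your absolute-value bound is insensitive to this, which is the safe way to handle what is likely a typo), and the existence of a maximal coupling on a general measurable space requires the diagonal to be product-measurable, a technicality reasonably dispatched by citation.
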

\begin{proof}
The first inequality comes from Theorem~3.1 \cite{Guo-2017-Relaxed} and the second inequality is the restatement of \textit{Pinsker's inequality} \cite{Cover-2012-Elements}. 
\end{proof}
\textbf{RW Regularization:} Given the generative distribution $p(\vec{\theta}\mid\mu_0, \sigma_0^2)=p(\vec{x}\mid\mu_0, \sigma_0^2)$ and the variational distribution $q(\vec{\theta}\mid \vec{w}) = q(\vec{x}\mid \vec{\mu}(\vec{w}), \vec{\sigma}(\vec{w}))$, the RW term can be easily integrated analytically for $\varphi(\cdot)=\left\|\cdot\right\|^2$ and the Gaussian distributions\cite{Dowson-1982-Frechet, Knott-1984-Optimal}, where the closed-form solution is summarized in the following theorem.  
\begin{theorem}
Assume that $\varphi(\cdot)=\left\|\cdot\right\|^2$, $\bp=\text{Gaussian}\left(\vec{\mu}_p, \Sigma_p\right)$, and $\bq=\text{Gaussian}\left(\vec{\mu}_q, \Sigma_q\right)$, then
\[
W_{D_\varphi} \left(\bp, \bq\right) = \left\|\vec{\mu}_p - \vec{\mu}_q\right\|^2 + \tr\left(\Sigma_p + \Sigma_q - 2\left(\Sigma_p\Sigma_q\right)^{1/2}\right). 
\]
\end{theorem}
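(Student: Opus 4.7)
The plan is to reduce the statement to the classical Fréchet (or squared $2$-Wasserstein) formula for two Gaussians, which is exactly the right-hand side displayed in the theorem. The key observation is that the choice $\varphi(\vec{x}) = \|\vec{x}\|^2$ collapses the Bregman cost $D_\varphi$ onto the squared Euclidean distance, so $W_{D_\varphi}$ coincides with the ordinary squared $2$-Wasserstein distance $W_2^2$. Concretely, with $\nabla\varphi(\vec{x}) = 2\vec{x}$ a direct expansion of the (standard, non-negative) Bregman divergence yields $D_\varphi(\vec{x},\vec{y}) = \|\vec{x}-\vec{y}\|^2$, and substitution into Definition~\ref{Def:RWD} gives
\[
W_{D_\varphi}(\bp, \bq) \;=\; \inf_{\pi\in\prod(\bp,\bq)} \int_{\XCal\times\XCal} \|\vec{x}-\vec{y}\|^2\, \pi(d\vec{x},d\vec{y}) \;=\; W_2^2(\bp,\bq).
\]

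The remaining task is to produce the closed-form expression for $W_2^2$ between two Gaussians. I would argue in three steps. First, restrict the infimum to jointly Gaussian couplings; this is legitimate because $\EE\|\vec{X}-\vec{Y}\|^2$ depends only on first and second joint moments, while the marginal constraints fix the means and the diagonal covariance blocks. Second, reparametrize Gaussian couplings by the cross-covariance $C = \EE[(\vec{X}-\vec{\mu}_p)(\vec{Y}-\vec{\mu}_q)^\top]$, admissible precisely when the block matrix with diagonal $(\Sigma_p,\Sigma_q)$ and off-diagonal $C$ is positive semidefinite, which rewrites the objective as $\|\vec{\mu}_p-\vec{\mu}_q\|^2 + \tr(\Sigma_p+\Sigma_q) - 2\,\tr(C)$. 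Third, solve the semidefinite program $\max \tr(C)$ subject to the block positive semidefinite constraint; the maximum equals $\tr((\Sigma_p\Sigma_q)^{1/2})$, attained by the Brenier coupling $\vec{y} = \vec{\mu}_q + T(\vec{x}-\vec{\mu}_p)$ with $T = \Sigma_p^{-1/2}(\Sigma_p^{1/2}\Sigma_q\Sigma_p^{1/2})^{1/2}\Sigma_p^{-1/2}$.

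The hard part is the third step: the trace maximization under a block positive semidefinite constraint involves square roots of non-symmetric products, and it requires the identity $\tr((\Sigma_p\Sigma_q)^{1/2}) = \tr((\Sigma_p^{1/2}\Sigma_q\Sigma_p^{1/2})^{1/2})$ in order to reconcile the candidate optimizer with the symmetric expression in the theorem statement. Since this optimization is entirely classical, the most economical presentation is to stop after establishing $W_{D_\varphi} = W_2^2$ and then invoke the Dowson-Landau and Knott-Smith formulas that are already cited alongside the theorem statement.
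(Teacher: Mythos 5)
Your proposal is correct and follows the same route as the paper, which offers no proof of this theorem beyond the pointer to Dowson--Landau \cite{Dowson-1982-Frechet} and Knott--Smith \cite{Knott-1984-Optimal}: your two steps --- showing that $\varphi(\cdot)=\|\cdot\|^2$ collapses $D_\varphi$ to the squared Euclidean cost so that $W_{D_\varphi}=W_2^2$, and then deriving (or simply citing) the classical Gaussian formula via the cross-covariance trace maximization --- are exactly what that citation compresses, with your middle section being the standard proof of the cited results rather than a different argument. One caution worth making explicit: Definition~\ref{Def:RWD} as printed evaluates the gradient at $\vec{x}$ rather than $\vec{y}$, which would give $D_\varphi(\vec{x},\vec{y})=-\|\vec{x}-\vec{y}\|^2$ and render the theorem false; you are right to read it as the standard (non-negative) Bregman divergence, as in \cite{Guo-2017-Relaxed}, but that silent correction of the paper's typo should be stated rather than assumed.
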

\textbf{Inference Network $q(\vec{x}\mid \vec{w})$:} The inference network is constructed as follows: 
\[ \vec{x} \sim \text{Gaussian}\left(\vec{\mu}(\vec{w}), \diag\left(\vec{\sigma}^2(\vec{w})\right)\right) \],
where
\begin{eqnarray*}
& \vec{\lambda}_1 = \textsf{ReLU}\left(W_1\vec{w} + \vec{b}_1\right), & \vec{\lambda}_2 = \textsf{ReLU}\left(W_2\vec{\lambda}_1 + \vec{b}_2\right), \\
& \vec{\mu}(\vec{w}) = W_3\vec{\lambda}_2 + \vec{b}_3, & \log\left(\vec{\sigma}(\vec{w})\right) = W_4\vec{\lambda}_2 + \vec{b}_4. 
\end{eqnarray*}

\textbf{Sampled Gradients:} One can directly compute the gradients with respect to the generative parameters $\Theta$, including $t$, $S$ and $W$, and the sample gradients with respect to the variational parameters $\Phi$, including $\vec{\mu}(\vec{w})$ and $\vec{\sigma}(\vec{w})$. Moreover, applying the reparameterization tricks yields
\[
\partial L/\partial \vec{\mu}(\vec{w}) \approx \partial L/\partial \hat{\vec{\theta}}, \quad \partial L/\partial \vec{\sigma}(\vec{w}) \approx \hat{\epsilon}\cdot\partial L/\partial \hat{\vec{\theta}},  
\]
which can be used to jointly update $\Theta$ and $\Phi$ by stochastic gradient backpropagation.

\section{Experiment}\label{sec:experiment}
In this section, we investigate the effectiveness of NSMDM and NSMTM on large-scale collections of short text, especially tweets. The objective of the experiments includes:  (1) a quantitative evaluation of predictive performance and topic coherence; (2) a quantitative measurement of latent topic sparsity; (3) a quantitative evaluation of the regularization parameter and learning rate; and (4) an interpretation of inferred topics.

\subsection{Data sets}
We conduct the experiments on three different genres of large-scale real-world text corpora. To make a direct comparison with the existing work, we adopt the same pre-processing setup as~\cite{Miao-2016-Neural, Srivastava-2017-Autoencoding, Card-2017-Neural}. 

\begin{itemize}
\item \textbf{Twitter. } Tweets are good examples of short user-generated content. We collect three collections of tweets from the Twitter data set released by the Stanford Network Analysis Project\footnote{http://snap.stanford.edu/data/twitter7.html}. The original data set contains 467 million Twitter posts from 20 million users covering the period from June 1 2009 to December 31 2009. Three sampled Twitter data sets, namely Twitter (S), Twitter (M) and Twitter (L), are the collections of tweets with short, medium and long average document length by words, respectively. 

\item \textbf{NYT. } The collection of New York Time articles\footnote{https://ldc.upenn.edu/} is a good representative of user-generated content. The original dataset contains 299,752 news articles published in New York Times between 1987 and 2007, where the vocabulary size is 102,660 and the average length of each document is 166.1. To investigate the performance of all the methods on short content, we vary the document length by randomly sampling words from the original document and obtain three short text corpus, denoted as NYT (S), NYT (M) and NYT (L).

\item \textbf{20NG. } This data set, denoted as 20NG\footnote{http://qwone.com/$\sim$jason/20Newsgroups/}, contains 18,774 newsgroup documents labeled in 20 categories, with a vocabulary of 60,698 unique words. We use the sampled data set with 11,000 training instances and 2000 word vocabulary \cite{Srivastava-2017-Autoencoding} and vary the document length by randomly sampling words from the original document. As a result, we obtain two short text corpora, denoted as 20NG (S) and 20NG (M). 
\end{itemize}
The statistics of all eight data sets are summarized in Table~\ref{tab:datasets}.
\begin{table}[t]
\centering 
\caption{Statistics of All Data Sets} \label{tab:datasets}
\begin{tabular}{|c|c|c|c|} \hline
\multirow{2}{*}{Data set} & \multirow{2}{*}{\# Documents} & Vocabulary & Avg doc len \\
& & Size & by words \\ \hline
Twitter (S) & 54,000,648 & 74,027 & 6.7 \\ \hline
Twitter (M) & 4,470,965 & 71,497 & 11.3 \\ \hline
Twitter (L) & 243,472 & 48,590 & 16.0 \\ \hline
NYT (S) & 279,815 & 66,317 & 7.1 \\ \hline
NYT (M) & 298,714 & 81,212 & 14.2 \\ \hline
NYT (L) & 297,456 & 87,969 & 21.2 \\ \hline
20NG (S) & 14,925 & 1,965 & 8.3 \\ \hline
20NG (M) & 9,763 & 1,982 & 16.5 \\ \hline
\end{tabular}
\end{table}
\subsection{Metrics}
We compare NSMDM and NSMTM with other methods by \textit{perplexity} and \textit{pointwise mutual information (PMI)}, which are the standard criteria for measuring the quality of document and topic models. The results obtained by using some other metrics~\cite{Banerjee-2002-Adapted, Bouma-2009-Normalized, Chang-2009-Reading} are similar and hence omitted due to the page limit. 

\begin{definition}[Perplexity \cite{Blei-2003-Latent}]
The perplexity is used to measure the predictive performance of document/topic model. Mathematically, given $D_{train}$ and $D_{test}$ with each document $\vec{w}_j$ in $D_{test}$ divided into two parts, $\vec{w}_j=(\vec{w}_{j1}, \vec{w}_{j2})$, the perplexity is calculated as: 
\begin{equation}
\text{Perplexity} = \exp\left\{-\frac{\sum_{j\in D_{test}}\text{log }p(\vec{w}_{j2}|\vec{w}_{j1}, D_{train})}{\sum_{j\in D_{test}}|\vec{w}_{j2}|}\right\},
\end{equation}
where $|\vec{w}_{j2}|$ is the number of tokens in $\vec{w}_{j2}$.
\end{definition}
We follow~\cite{Miao-2016-Neural} by using the original variational lower bound to estimate the test document perplexities of all the models and held out 10$\%$ documents as test set $D_{test}$ for all data sets. 
\begin{definition}[PMI \cite{Newman-2010-Automatic}]
The PMI score is used to measure the semantic coherence of inferred topics. Mathematically, the PMI score of a topic $\vec{\phi}_k$ refers to the average relevance of each pair of the top-$N$ words:
\begin{equation}\label{Def:PMI}
\text{PMI}(\vec{\phi}_k) = \frac{2}{N(N-1)}\sum\limits_{1\leq i<j\leq N}\log\left(\frac{p(w_i,w_j)}{p(w_i)p(w_j)}\right),
\end{equation}
where $p(w_i,w_j)$ is the probability that $w_i$ and $w_j$ occurs in the same document and $p(w_i)$ is the probability that $w_i$ appears in a document. 
\end{definition}
These probabilities are computed from a much larger corpus. In this paper, we set $N=15$ throughout.
\begin{definition}[Topic Sparsity \cite{Wang-2009-Decoupling}]
The topic sparsity (TS) score is used to measure the topic sparsity in document-topic and topic-word distributions quantitatively. Mathematically, the TS scores of $\vec{\theta}_j$ and $\vec{\phi}_k$ are
\begin{equation}\label{Def:TS}
\text{TS}(\vec{\theta}_j) = \frac{\sum_{k=1}^{K} 1_{({\theta}_{jk}=0)}}{K}, \quad \text{TS}(\vec{\phi}_k) = \frac{\sum_{v=1}^{|V|}1_{({\phi}_{kv}=0)}}{|V|},
\end{equation}
where $K$ is the number of topics and $V$ is the vocabulary.
\end{definition}
\begin{remark}
Note that the definition here is different from \cite{Lin-2014-Dual, Lin-2016-Understanding} for topic sparsity: ours is directly defined by topic proportion while \cite{Lin-2014-Dual, Lin-2016-Understanding} use an unnatural scheme with a set of auxiliary topic selectors. 
\end{remark}

\begin{table*}[t]
\caption{Performance of All Algorithms on All Data Sets.}\label{tab:result_perplexity_pmi}
\begin{tabular}{|c||c|c||c|c||c|c||c|c|} \hline
& Perplexity & PMI & Perplexity & PMI & Perplexity & PMI & Perplexity & PMI \\ \hline \hline
& \multicolumn{2}{c||}{Twitter (S)} & \multicolumn{2}{c||}{Twitter (M)} & \multicolumn{2}{c||}{Twitter (L)} & \multicolumn{2}{c|}{NYTimes (S)} \\ \hline
NSMTM & 7103.86 & \textbf{0.60} & 4951.12 & \textbf{0.948} & 2933.75 & 1.14 & 9131.90 & \textbf{0.51}  \\ \hline
NSMDM & \textbf{5572.39} & 0.48 & \textbf{2976.13} & 0.59 &\textbf{1384.98} & 0.75 & \textbf{7783.46} & 0.39 \\ \hline
NVDM & 6019.66 & 0.34 & 3400.97 & 0.47 & 1634.62 & 0.62 & 8007.03 & 0.27 \\ \hline
NVTM & 6170.95 & 0.31 & 3382.25 & 0.49 & 1901.80 & 0.58 & 8026.69 & 0.24 \\ \hline
ProdLDA & 7181.58 & 0.56 & 5227.23 & 0.945 & 3016.33 & \textbf{1.18} & 10322.43 & 0.47 \\ \hline
AEVLDA & 7031.69 & 0.45 & 5011.76 & 0.58 & 2971.37 & 0.71 & 11415.39 & 0.36 \\ \hline
OLDA & 15536.84 & 0.42 & 10512.09 & 0.55 & 4193.53 & 0.72 & 12566.67 & 0.33 \\ \hline
OBTM & - & 0.36 & - & 0.53 & - & 0.47 & - & 0.35 \\ \hline \hline
& \multicolumn{2}{c||}{NYT (M)} & \multicolumn{2}{c||}{NYT (L)} & \multicolumn{2}{c||}{20NG (S)} & \multicolumn{2}{c|}{20NG (M)} \\ \hline
NSMTM & 9497.21 & 0.58 & 9320.59 & \textbf{0.69} & 1262.66 & \textbf{0.41} & 1195.18 & \textbf{0.474} \\ \hline
NSMDM & \textbf{8276.35} & 0.42 & 8195.83 & 0.47 & \textbf{923.49} & 0.34 & \textbf{883.47} & 0.37 \\ \hline
NVDM & 8403.79 & 0.39 & \textbf{8023.84} & 0.40 & 940.00 & 0.29 & 892.46 & 0.32 \\ \hline
NVTM & 9349.62 & 0.39 & 8932.37 & 0.41 & 1155.09 & 0.27 & 929.59 & 0.29 \\ \hline
ProdLDA & 11954.86 & \textbf{0.59} & 11036.25 & 0.65 & 1557.20 & 0.38 & 1423.08 & 0.470 \\ \hline
AEVLDA & 10924.48 & 0.45 & 10776.58 & 0.52 & 1364.73 & 0.31 & 1385.38 & 0.36 \\ \hline
OLDA & 17092.26 & 0.43 & 17825.47 & 0.51 & 1768.18 & 0.34 & 1586.40 & 0.38 \\ \hline
OBTM & - & 0.37 & - & 0.41 & - & 0.29 & - & 0.33 \\ \hline
\end{tabular}
\end{table*}
\begin{figure*}[t]
\includegraphics[width=0.32\textwidth]{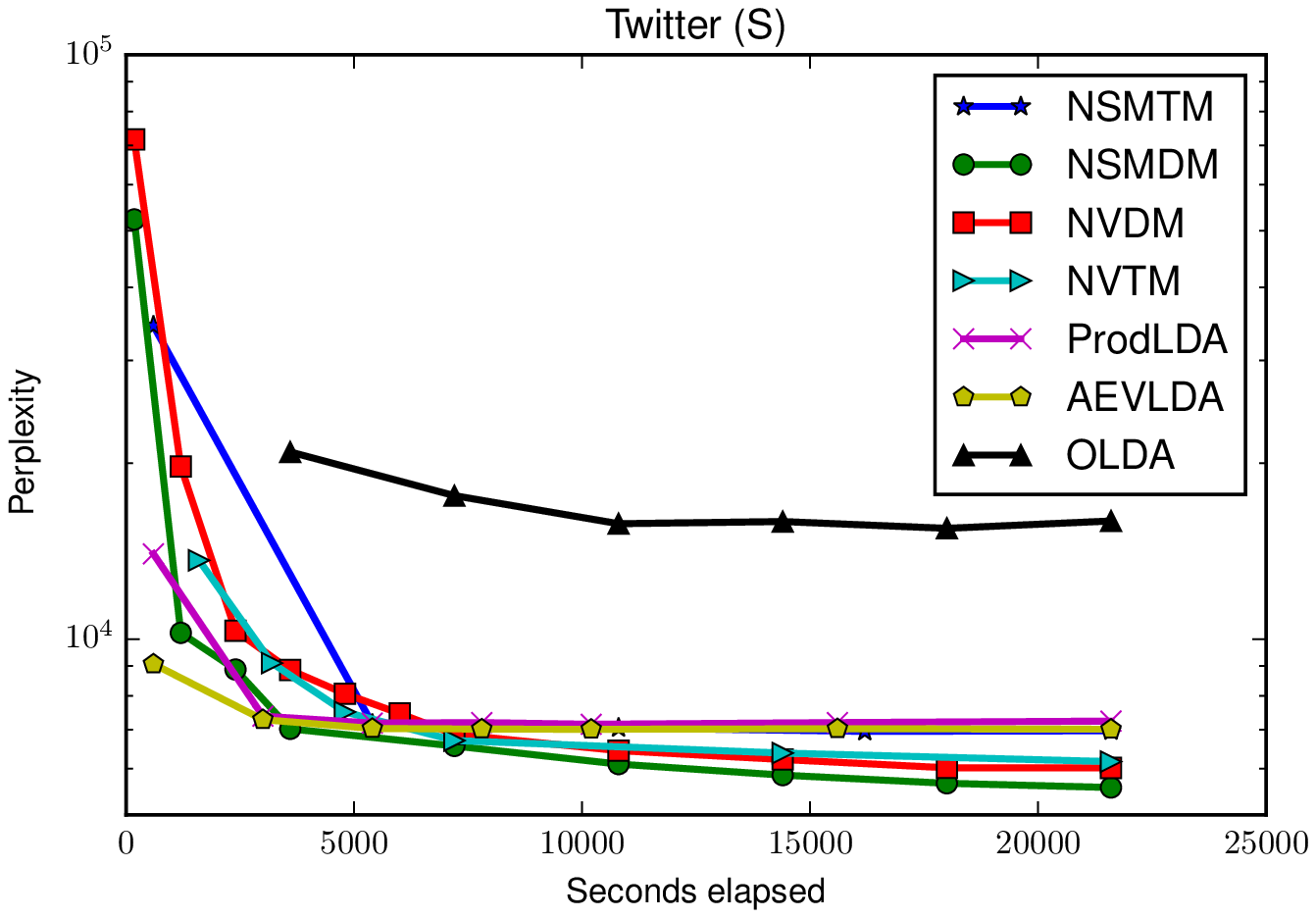}
\includegraphics[width=0.32\textwidth]{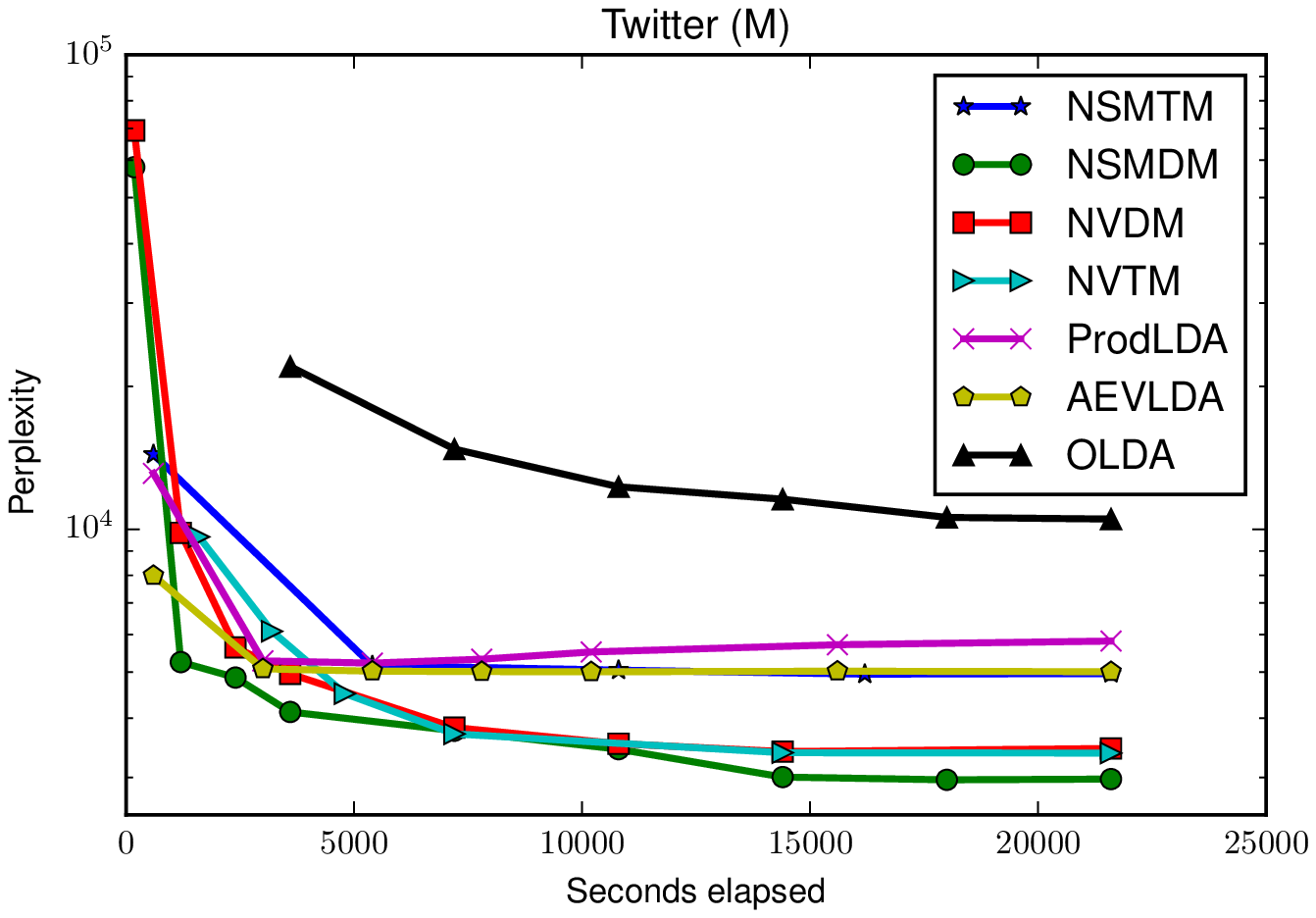}
\includegraphics[width=0.32\textwidth]{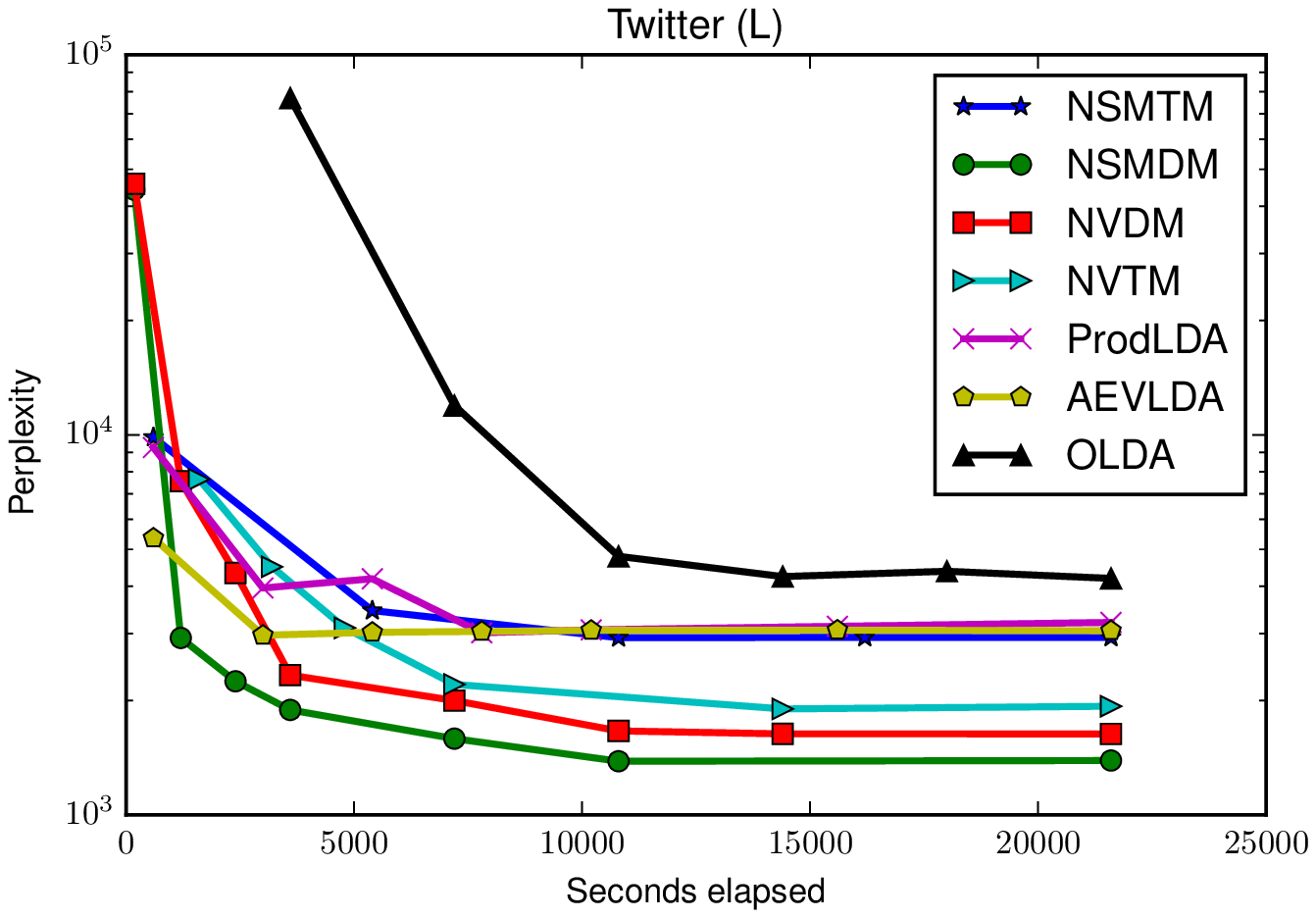}
\includegraphics[width=0.32\textwidth]{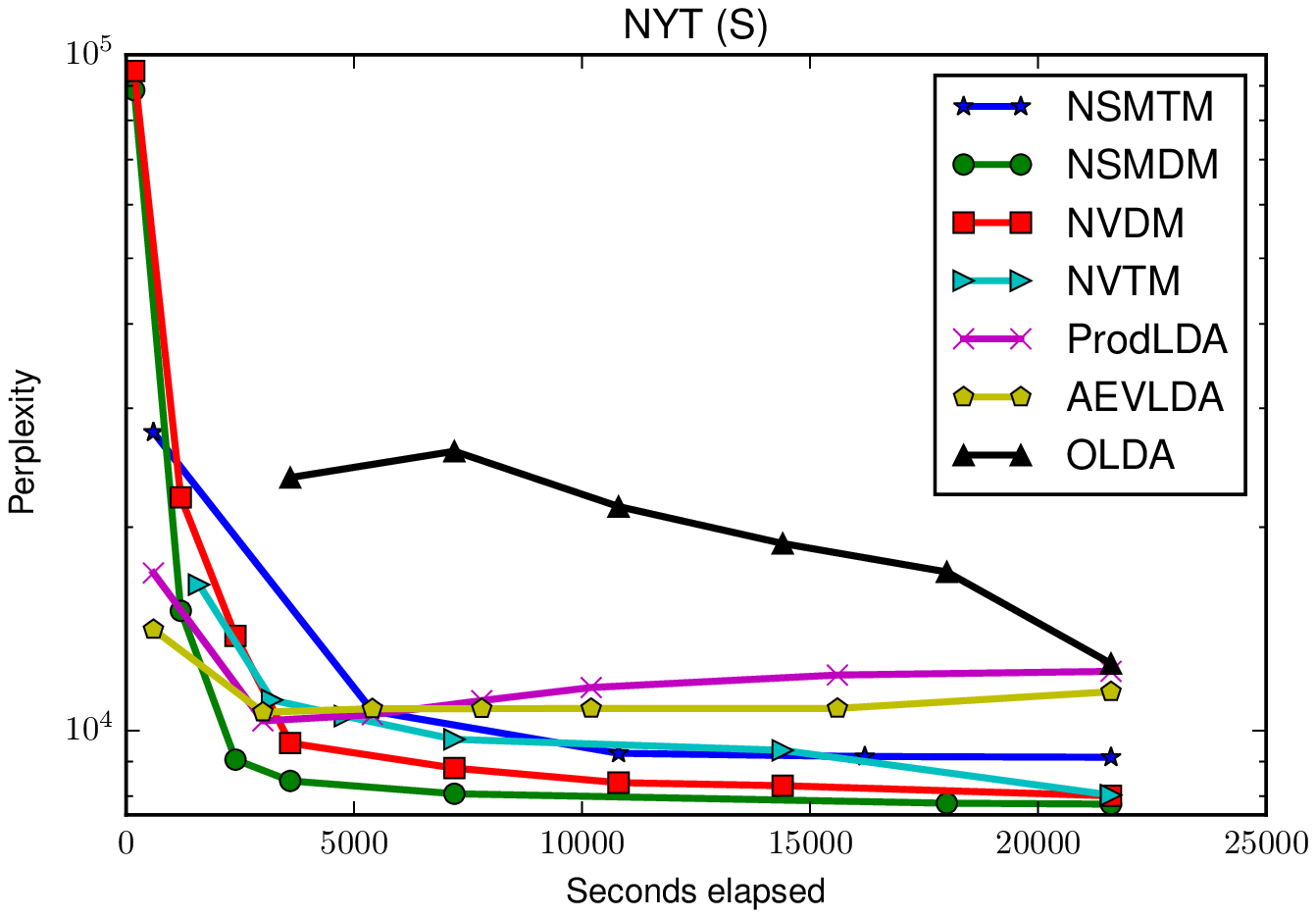}
\includegraphics[width=0.32\textwidth]{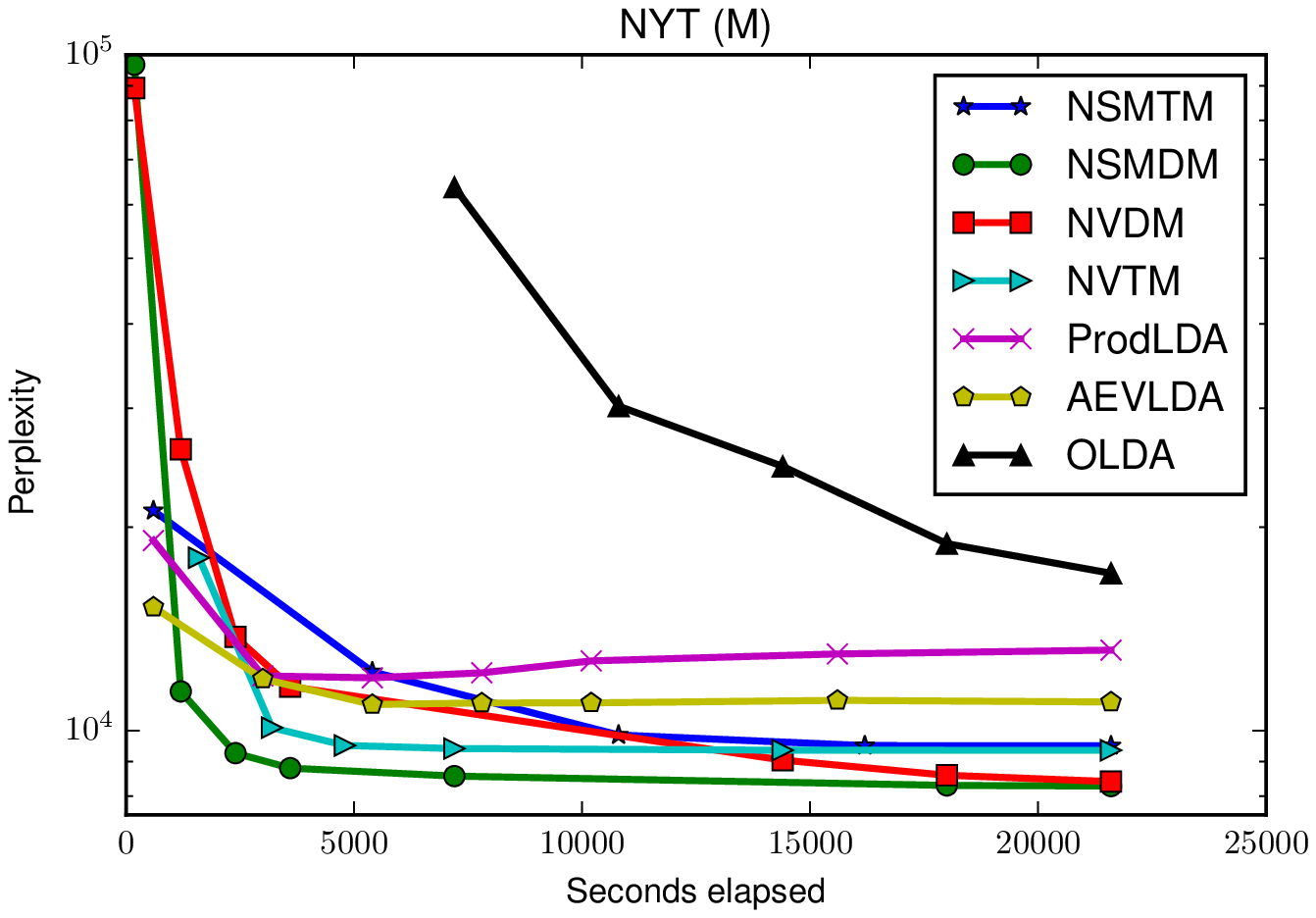}
\includegraphics[width=0.32\textwidth]{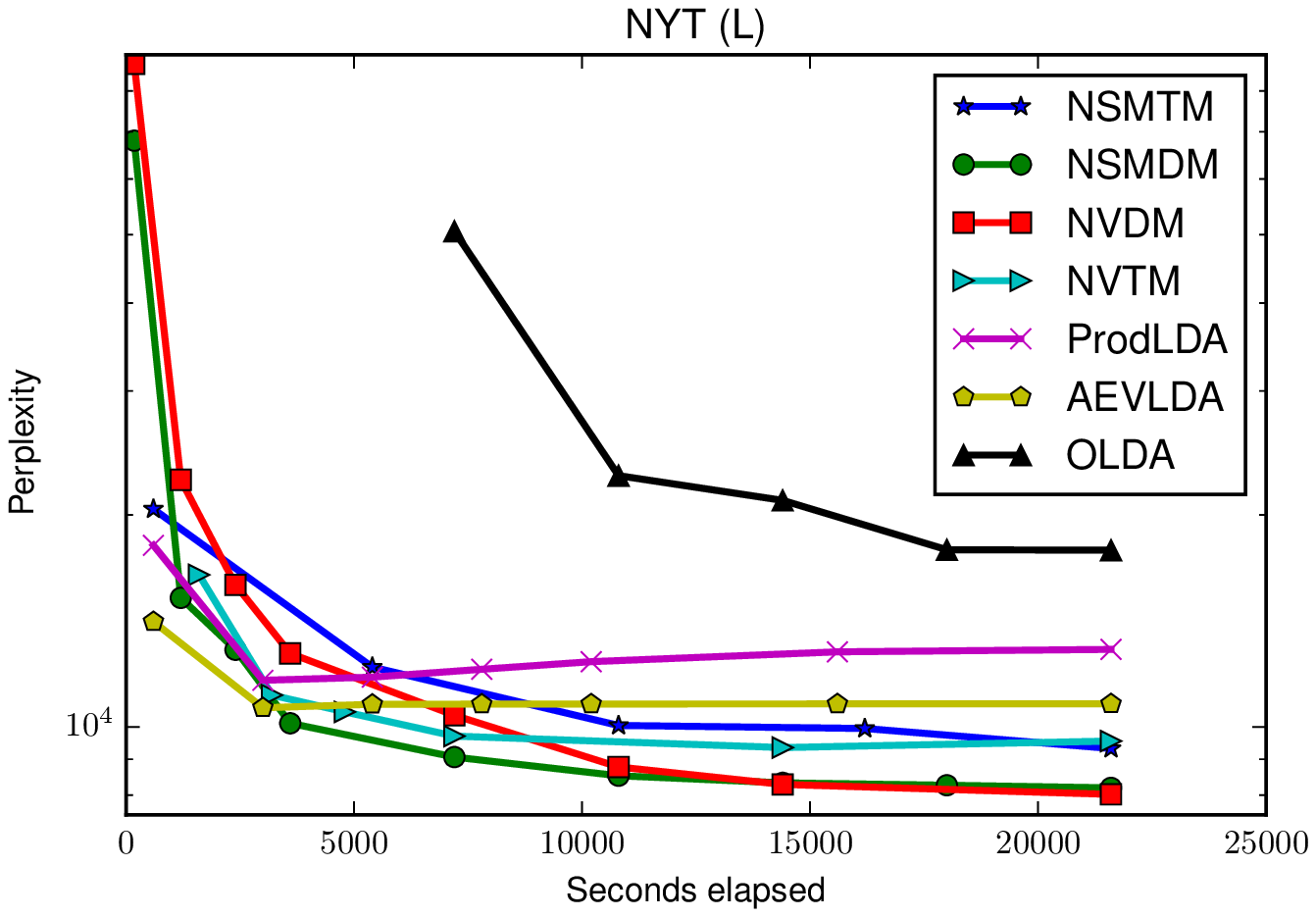}
\caption{Perplexity vs Time on Twitter and NYT Data Sets} \label{fig:perplexity-time}
\end{figure*}
\subsection{Candidate Algorithms for Comparison}
We compare NSMDM and NSMTM with the following two probabilistic topic models and four deep neural document/topic models. \begin{itemize}
\item \textbf{OLDA.} Online LDA \cite{Hoffman-2010-Online} induces topic sparsity as the hyperparameter approaches zero. We use the implementation\footnote{https://github.com/blei-lab} provided by the authors.
\item \textbf{OBTM.} Online Biterm Topic Model  \cite{Cheng-2014-Biterm} is a sparsity-enhanced probabilistic topic model that performs well on short text. We use the implementation\footnote{https://github.com/xiaohuiyan/OnlineBTM} with incremental Gibbs sampling provided by the authors.
\item \textbf{NVDM.} Neural Variational Document Model \cite{Miao-2016-Neural} is an unsupervised generative document model that has been proven better than many existing models, including RSM \cite{Hinton-2009-Replicated}, docNADE \cite{Larochelle-2012-Neural} and SBN/DARN \cite{Mnih-2014-Neural}. We use the implementation\footnote{https://github.com/ysmiao/nvdm} provided by the authors.
\item \textbf{AEVLDA/ProdLDA.} Autoencoding variational LDA \cite{Srivastava-2017-Autoencoding} provides an Autoencoding variational inference framework for topic model. ProdLDA is the improvement of AEVLDA by replacing the mixture structure with a product of experts using the neural network. We use the implementation\footnote{https://github.com/akashgit/autoencoding{\_}vi{\_}for{\_}topic{\_}models} provided by the authors.
\item \textbf{NVTM.} Neural Variational Topic Model \cite{Card-2017-Neural} is a neural sparse additive generative model that induces topic sparsity, outperforming its probabilistic counterpart \cite{Eisenstein-2011-Sparse}. We use the implementation\footnote{https://github.com/dallascard/neural\_topic\_models} provided by the authors. 
\end{itemize}
Many sparsity-enhanced topic models, such as sparse topic models \cite{Shashanka-2008-Sparse, Wang-2009-Decoupling}, dual-sparse topic model \cite{Lin-2014-Dual}, sparse coding \cite{Hoyer-2004-Matrix, Zhu-2011-Sparse}, and focused topic models \cite{Chen-2012-contextual}, are based on specific batch sampling and variational inference methods and hence can not scale to large text corpora\footnote{The alternative way of sampling a small batch of the entire data set is, unfortunately proven to result in high perplexity and misleading inferred topics \cite{Hoffman-2013-Stochastic}.}. Furthermore, \cite{Zhang-2013-Sparse, Lin-2016-Understanding} only identify sparsity in either topic mixtures or topic-word distributions. Thus, we exclude these methods in our experiment. We also exclude the neural methods proposed in \cite{Hinton-2009-Replicated, Larochelle-2012-Neural, Mnih-2014-Neural, Miao-2017-Discovering} since they have been proven worse than NVDM, ProdLDA and NVTM \cite{Miao-2016-Neural, Srivastava-2017-Autoencoding, Card-2017-Neural}.  

In the experiment, we use the default parameters for probabilistic models, and set the same multilayer perceptron (MLP) and dropout on the output of the MLP for all neural models on each data set for a fair comparison. Moreover, we set the number of topics $T = \{50, 150, 200\}$ for 20NG, NYT and Twitter data sets, respectively.

For the computing environment, we run two probabilistic models with Intel Xeon CPU E5-2643 v2@3.50GHz CPU on all data sets, and the other neural models including NSMDM and NSMTM with NVidia Titan Xp GPU (12GB memory). It is worthy noting that the GPU implementation for the probabilistic models is possible but still under exploration~\cite{Yan-2009-Parallel}. In addition, we set the parallel setting with dual GPU for the largest Twitter (S) data set while the standard setting with a single GPU for the other data sets. Each model is trained within 1 hour for 20NG data sets and 6 hours for NYT and Twitter data sets. 

\subsection{Experimental Results}
We first present and analyze the performance of all methods, and then demonstrate the existence of topic sparsity in the latent structure of held-out documents. Next we carry out the parameter sensitivity analysis by tuning the regularization parameter $\gamma$ and the learning rate $\eta$. Finally, we interpret some selected topics.  
\begin{table*}[t]
\caption{Average Topic Sparsity on All Data Sets. } \label{tab:result_topic_sparsity}
\begin{tabular}{|c||c|c||c|c||c|c||c|c|} \hline
& TS($\vec{\theta}$)  & TS($\vec{\phi}$) & TS($\vec{\theta}$)  & TS($\vec{\phi}$) & TS($\vec{\theta}$)  & TS($\vec{\phi}$) & TS($\vec{\theta}$)  & TS($\vec{\phi}$) \\ \hline \hline
& \multicolumn{2}{c||}{Twitter (S)} & \multicolumn{2}{c||}{Twitter (M)} & \multicolumn{2}{c||}{Twitter (L)} & \multicolumn{2}{c|}{NYTimes (S)} \\ \hline
NSMTM & 0.9928 & 0.9714 & 0.9925 & 0.9707 & 0.9917 & 0.9628 & 0.9829 & 0.9585 \\ \hline
NSMDM & 0.9927 & 0.8301 & 0.9921 & 0.8282 & 0.9913 & 0.8267 & 0.9840 & 0.7863 \\ \hline \hline
& \multicolumn{2}{c||}{NYT (M)} & \multicolumn{2}{c||}{NYT (L)} & \multicolumn{2}{c||}{20NG (S)} & \multicolumn{2}{c|}{20NG (M)} \\ \hline
NSMTM & 0.9797 & 0.9642 & 0.9711 & 0.9650 & 0.9441 & 0.8100 & 0.9380 & 0.8104 \\ \hline
NSMDM & 0.9773 & 0.8014 & 0.9659 & 0.8012 & 0.9383 & 0.5703 & 0.9224 & 0.5655 \\ \hline
\end{tabular}
\end{table*}
\begin{figure*}[!ht]
\includegraphics[width=0.24\textwidth]{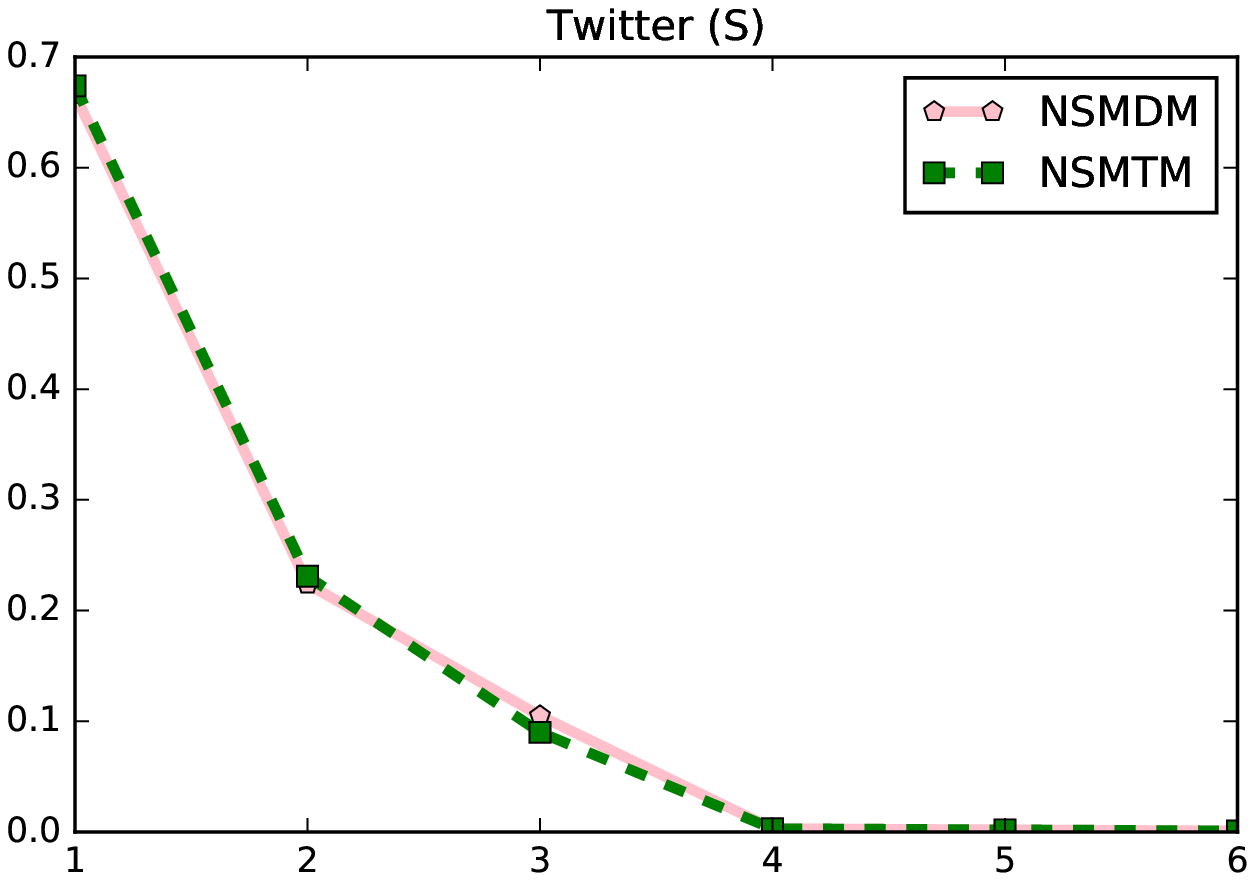}
\includegraphics[width=0.24\textwidth]{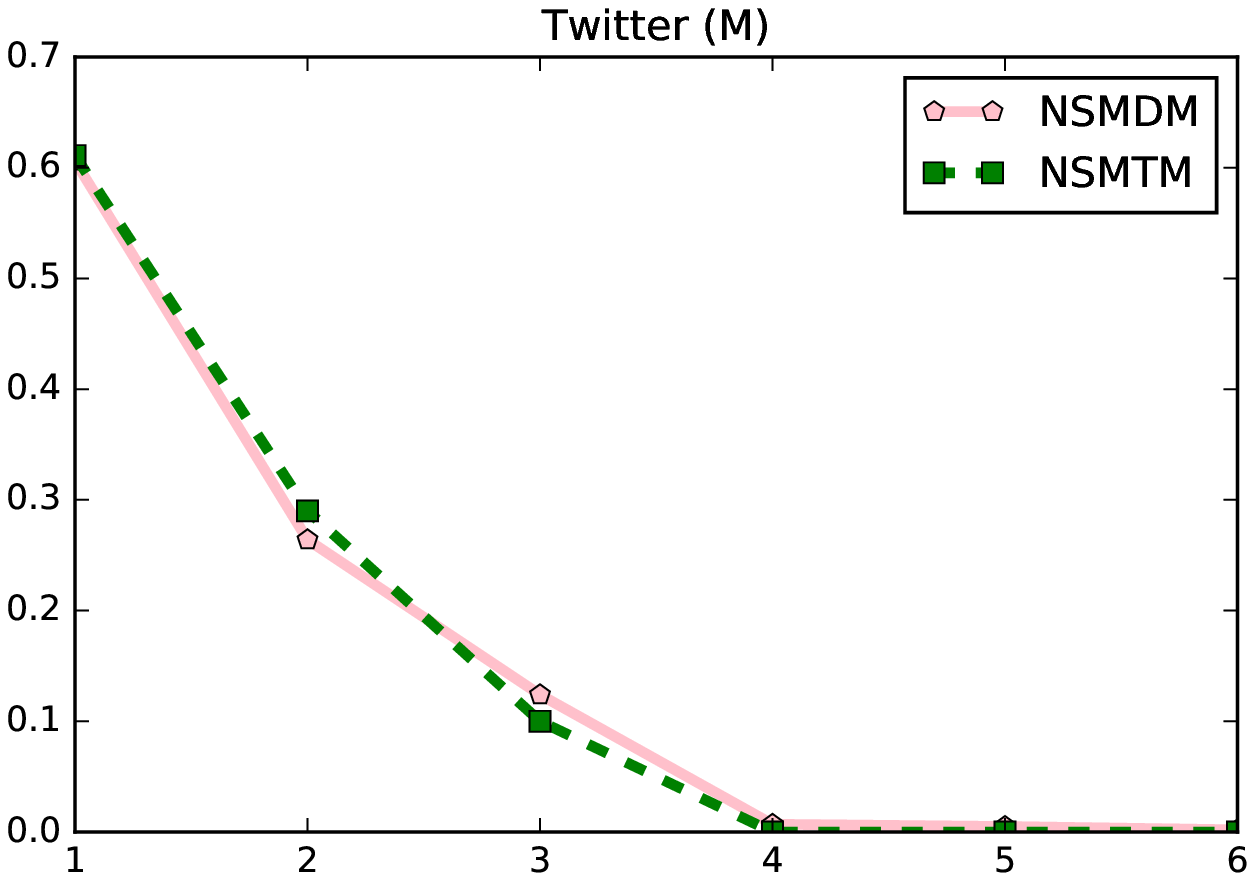}
\includegraphics[width=0.24\textwidth]{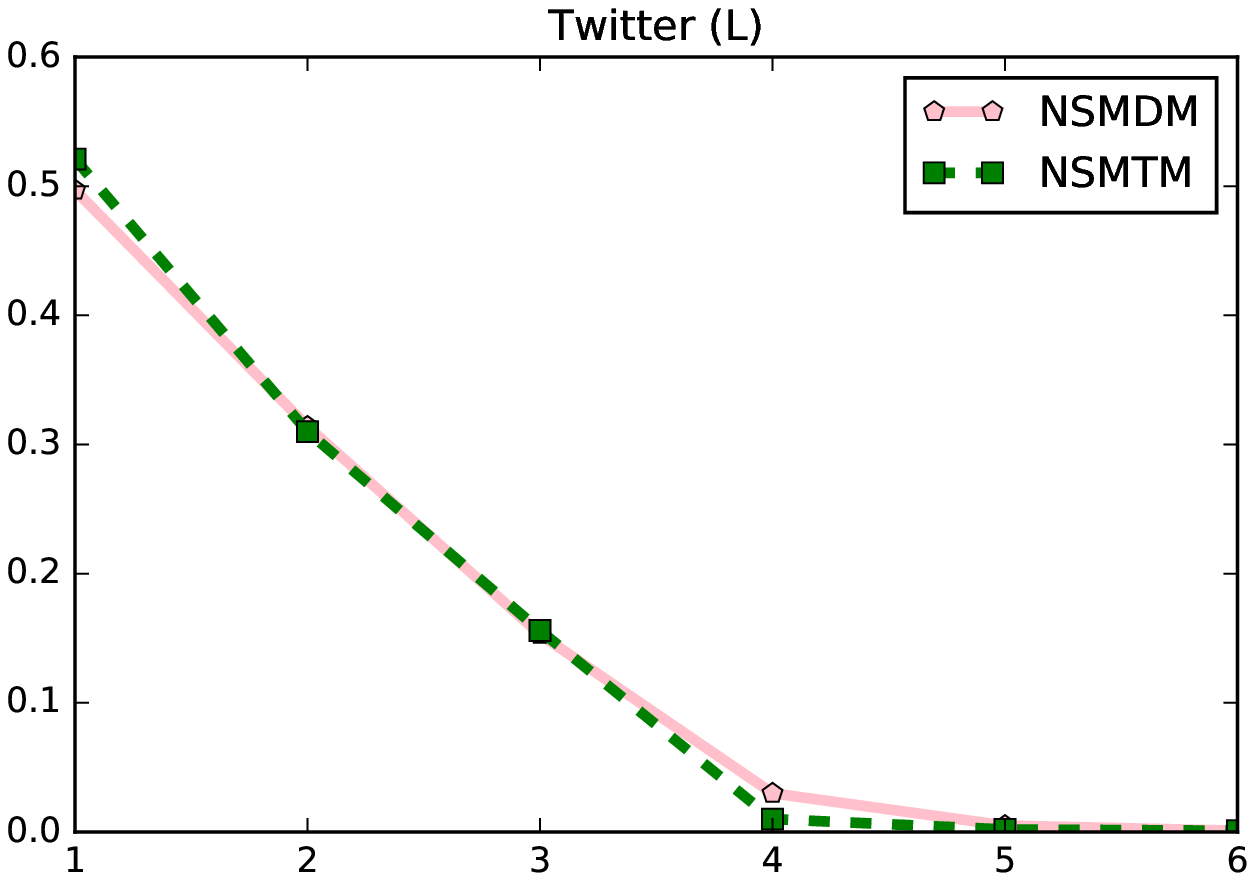}
\includegraphics[width=0.24\textwidth]{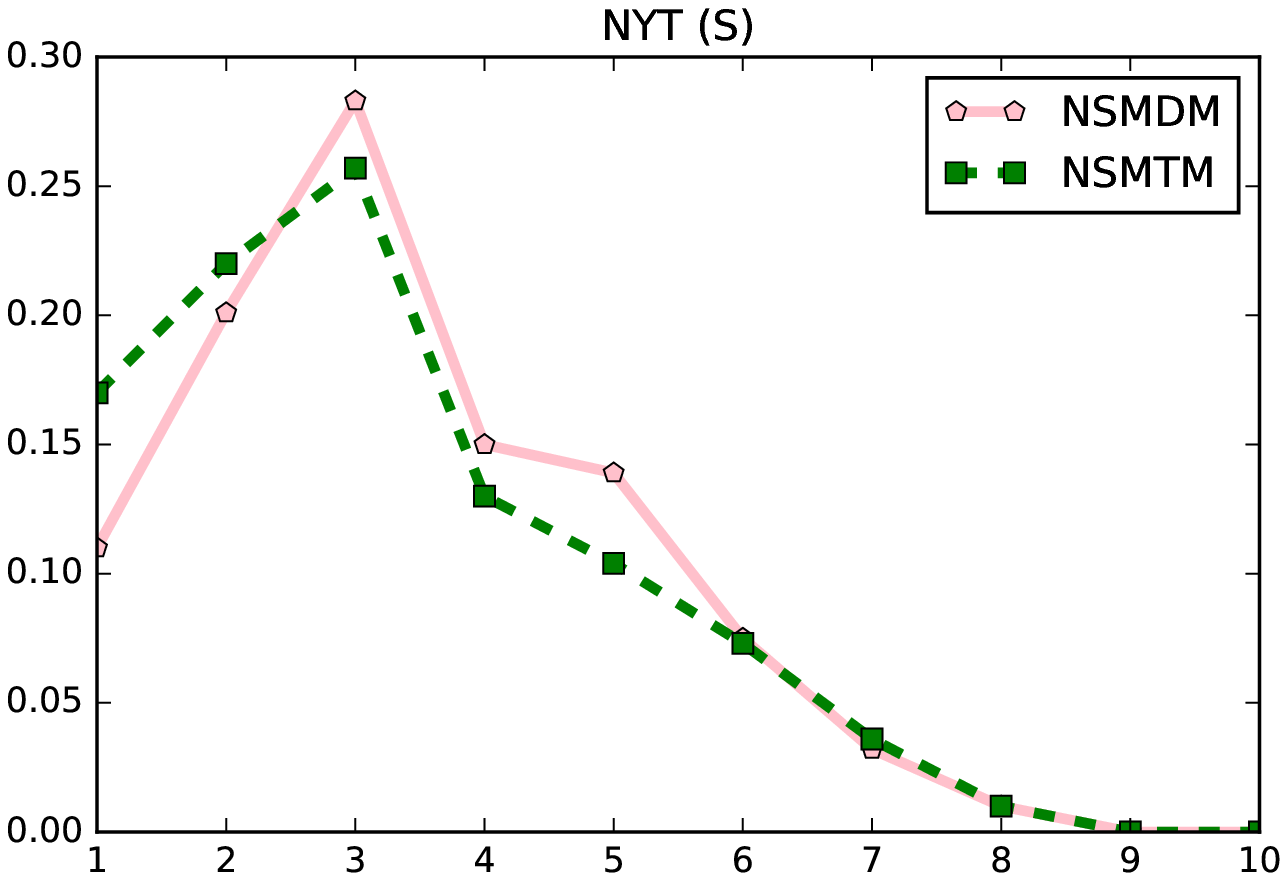}
\includegraphics[width=0.24\textwidth]{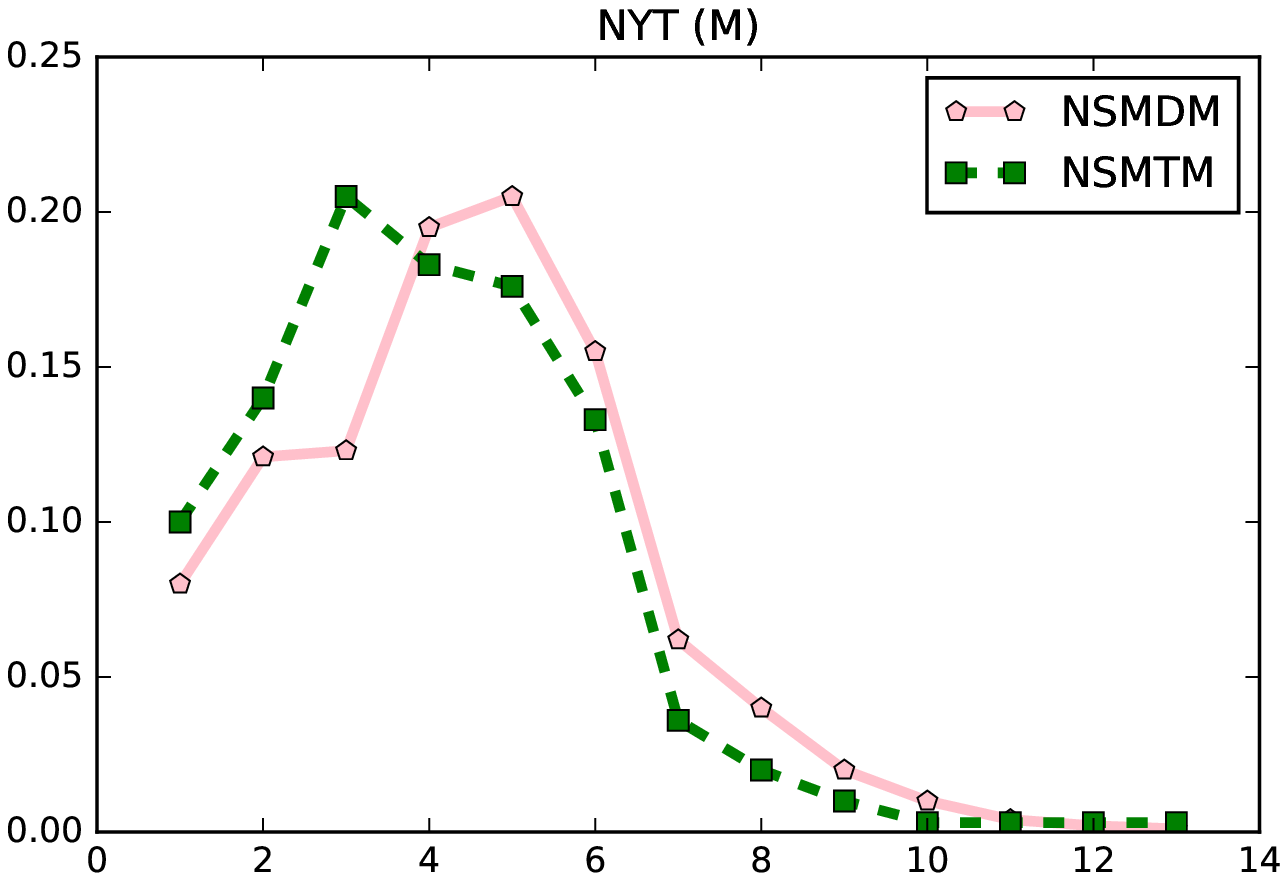}
\includegraphics[width=0.24\textwidth]{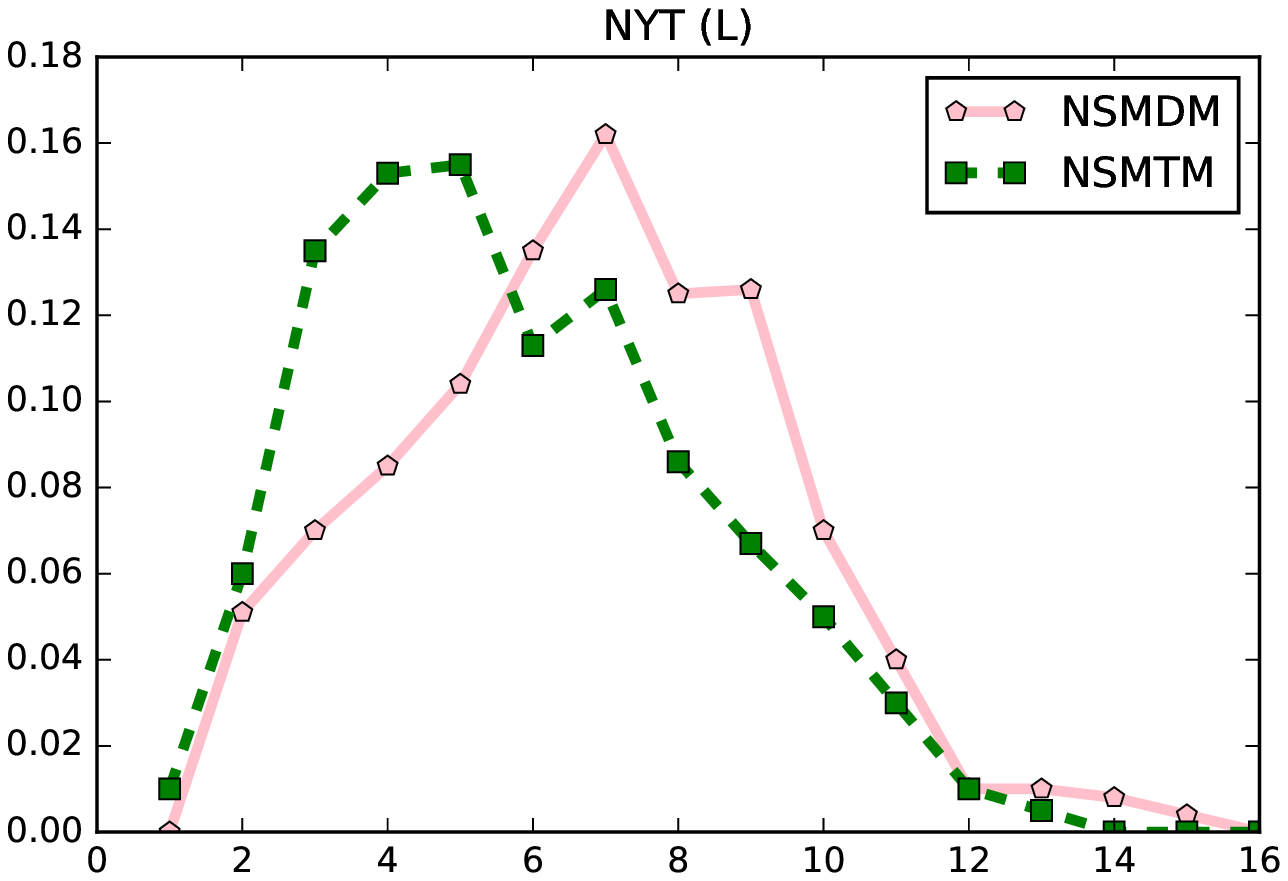}
\includegraphics[width=0.24\textwidth]{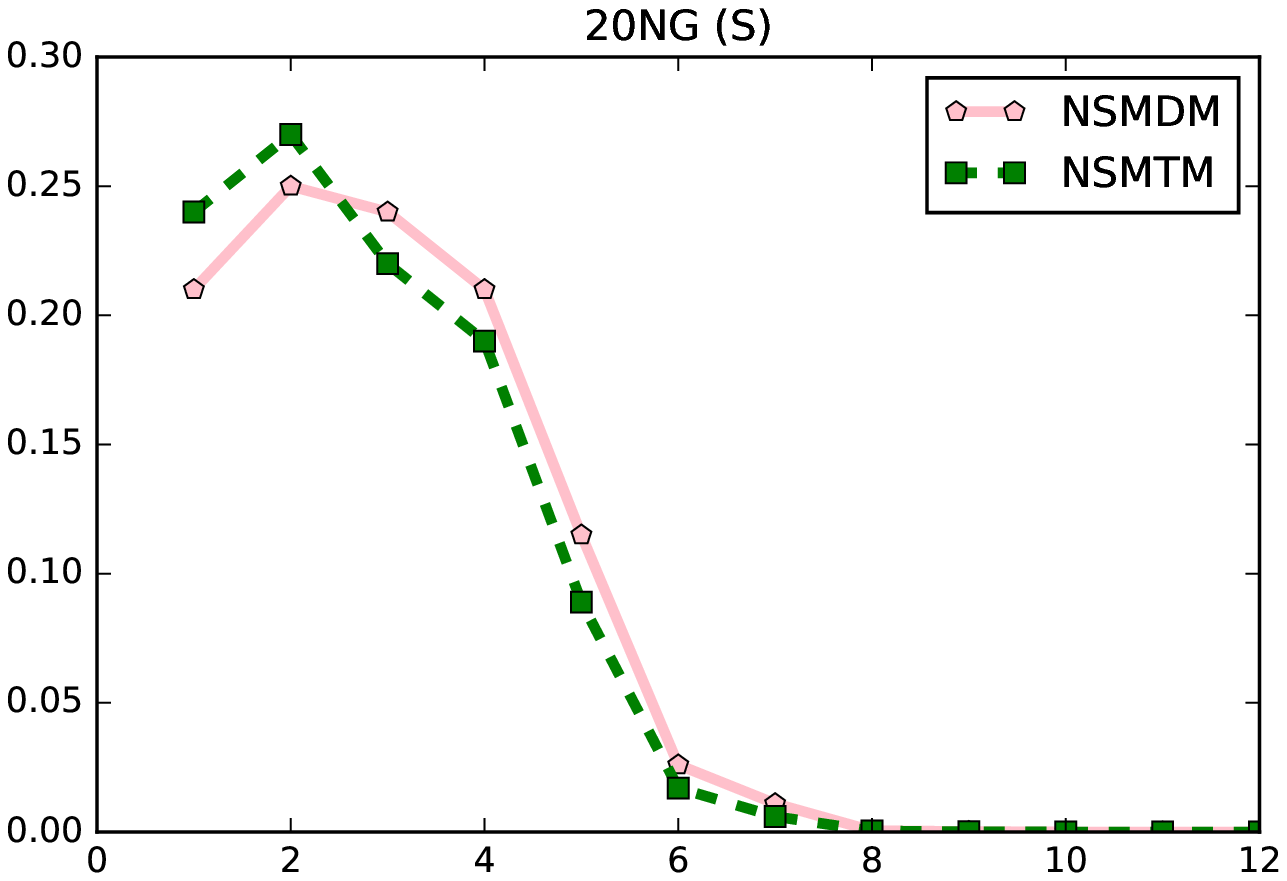}
\includegraphics[width=0.24\textwidth]{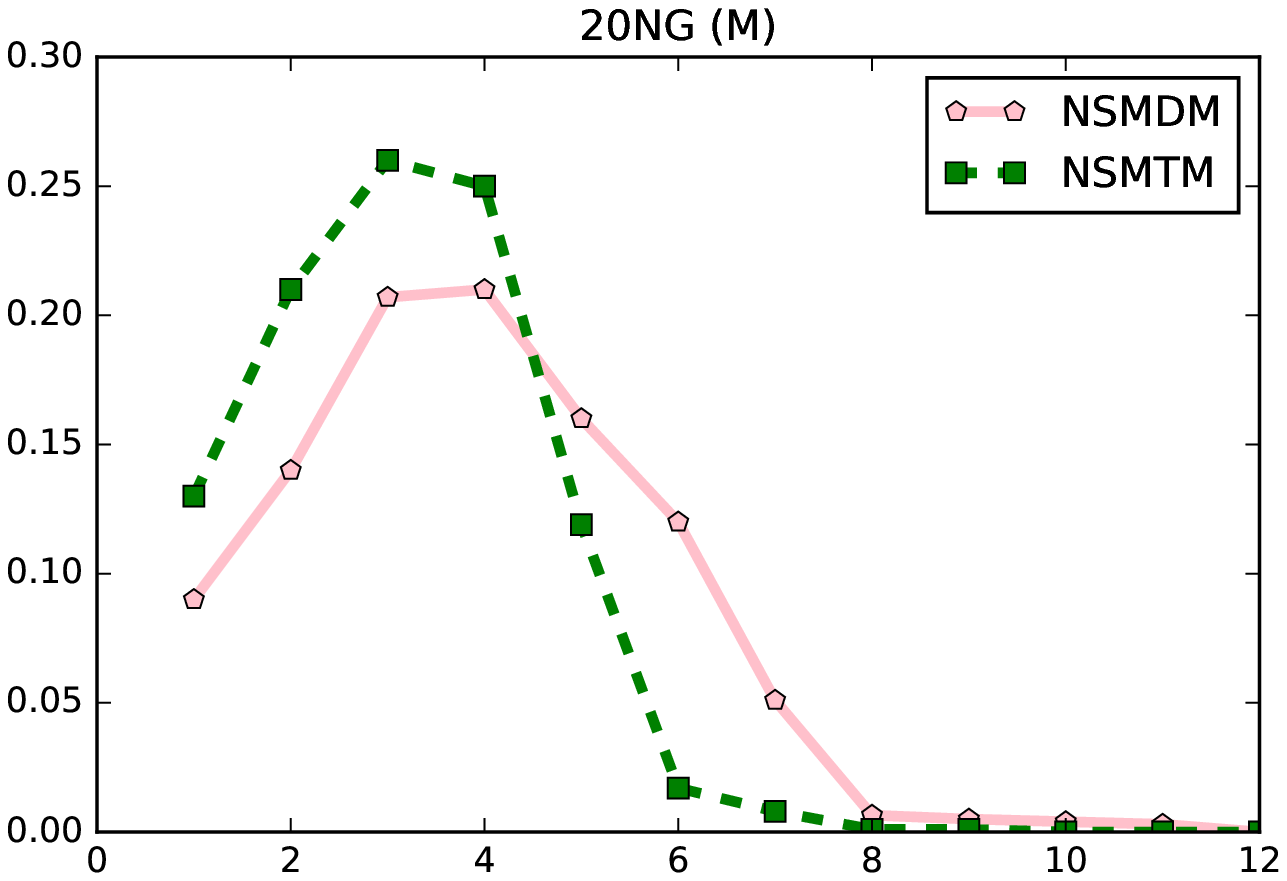}
\caption{Topic Sparsity in Held-out Documents of All Data Sets}\label{fig:topic-sparisty-All}
\end{figure*}

\subsubsection{Predictive Performance and Topic Coherence}
The predictive performance and topic coherence of all methods are summarized in Table~\ref{tab:result_perplexity_pmi}, where the perplexity of OBTM is not available since it is not based on the generative modeling\cite{Cheng-2014-Biterm}. 

\textbf{Twitter.} We observe that NSMTM yields the highest PMI score followed by ProdLDA while NSMDM yields the lowest perplexity followed by NVDM and NVTM. Possible explanations include (i) NSMTM and NSMDM can identify sparse topical structure of short text, (ii) NSMTM and ProdLDA explicitly model latent topics, (ii) NSMDM, NVDM and NVTM build up a simpler model structure with less parameters than neural topic models, hence yield better generalization results. The poor performance of probabilistic models may be due to the faster training with GPU over CPU, supporting the importance of the neural variational inference for analyzing large text corpora. To further investigate efficiency of NSMTM and NSMDM, we present the perplexity as a function of time in Figure~\ref{fig:perplexity-time}. We observe that NSMDM outperforms other methods consistently, supporting the necessity of modeling topic sparsity for short text corpora. In addition, the poor performance of OBTM illustrates that part of tweets may cover multiple topics, and addressing general topic sparsity is helpful for analyzing online streaming tweets.

\textbf{NYT.} NSMDM and NSMTM achieve the lowest perplexity and highest PMI score, respectively, and outperform other methods on nearly all data sets except for NYT (L). This can be explained by the weak topic sparsity in NYT (L) since the length of documents is long in NYT (L), which is close to a traditional social media data set. To further investigate efficiency of NSMTM and NSMDM, we also present the perplexity as a function of time in Figure~\ref{fig:perplexity-time}. The performance of all methods become worse on NYT, which suggests that mining topics is more difficult on NYT than Twitter. Nonetheless, the best performance of NSMDM and NSMTM provides a strong evidence that they can work well with user-generated contents. 

\textbf{20NG.} We observe that NSMDM and NSMTM are again best on 20NG in terms of perplexity and PMI score, respectively. 20NG is a relatively normal text collection with smaller vocabulary size where each document provides sufficient statistics of word co-occurrence. As a result, the performance of all the methods become better on 20NG than NYT and Twitter. The best performance of NSMDM and NSMTM also suggests that our models can address topic sparsity in the collection of relatively normal text.

\subsubsection{Topic Sparsity}
We report the TS score of each held-out short text in Table~\ref{tab:result_topic_sparsity}, and specify the distribution of documents with respect to number of topics in Figure~\ref{fig:topic-sparisty-All}.

\textbf{Twitter.} Table~\ref{tab:result_topic_sparsity} shows that topic sparsity in Twitter is stronger than that in other data sets. Explanation: each Twitter text reflects the viewpoint of a single author while each topic concentrates on a specific social event. Figure~\ref{fig:topic-sparisty-All} provides the evidence to our explanation for Twitter: Most of tweets only contains a single topic.

\textbf{NYT.} Table~\ref{tab:result_topic_sparsity} shows that topic sparsity in NYT is weaker than Twitter. This is reasonable since NYT is a normal text collection collected from social medium. Each document covers multiple topics despite its short length. Figure~\ref{fig:topic-sparisty-All} provides the evidence to our explanation for NYT and demonstrates that the topic sparsity varies as the average length of document changes. This confirms the diversity of user-generated content in NYT and suggests that the set of topics tend to be specific as the length of document increases.

\textbf{20NG.} Table~\ref{tab:result_topic_sparsity} shows that topic sparsity in 20NG is the weakest among all the data sets. The sparsity in document-topic distribution is stronger than that in topic-word distribution, possibly because the vocabulary size is so small that a set of terms are therefore frequently used. Figure~\ref{fig:topic-sparisty-All} shows that the average number of topics in each short text is nearly three, while a majority of short text in 20NG contains 2 or 3 topics. This makes sense since each document in 20NG is a sampled news related to more than one theme.

\subsubsection{Parameter Sensitivity}
We first investigate the effect of regularization parameter $\gamma$ over the PMI scores on 20NG, NYT and Twitter (L). Figure~\ref{fig:parameter-gamma-All} shows the PMI score obtained by NSMDM (Left) and NSMTM (Right) for $\gamma\in\{0.5, 0.75, 1, 1.25, 1.5\}$. We observe that NSMTM is more robust with respect to $\gamma$ than NSMDM and the smallest value of $\gamma$ leads to the best PMI score. This confirms our theoretical analysis in subsection~\ref{subsec:inference-network} that RW regularization is a good alternative to KL regularization with a proper choice of $\gamma$. 

Then we turn to explore the effect of learning rate $\eta$ over the PMI score on 20NG, NYT and Twitter (L). Figure~\ref{fig:parameter-eta-All} shows the PMI score obtained by NSMDM (Left) and NSMTM (Right) for $\eta\in 10^{-5}\times\{1, 5, 10, 50, 100, 500\}$. We observe that NSMTM is again more robust w.r.t. $\eta$ than NSMDM while both approaches may diverge for some large value of $\eta$. Also, the choice of $\eta$ is crucial for NSMDM: the range of $\left[0.0001, 0.001\right]$ works much better than other choices. 
\begin{figure}[t]
\includegraphics[width=0.23\textwidth]{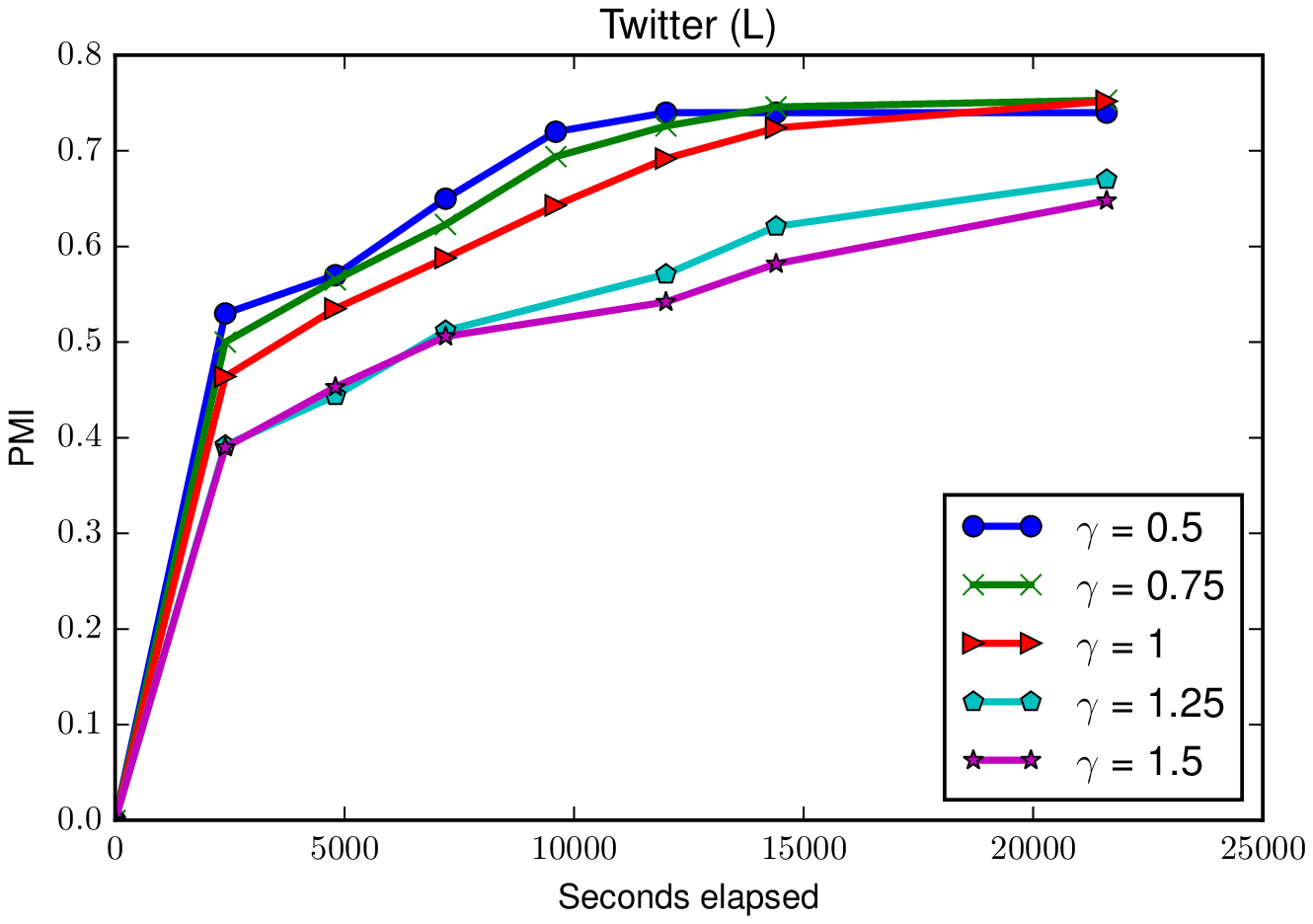}
\includegraphics[width=0.23\textwidth]{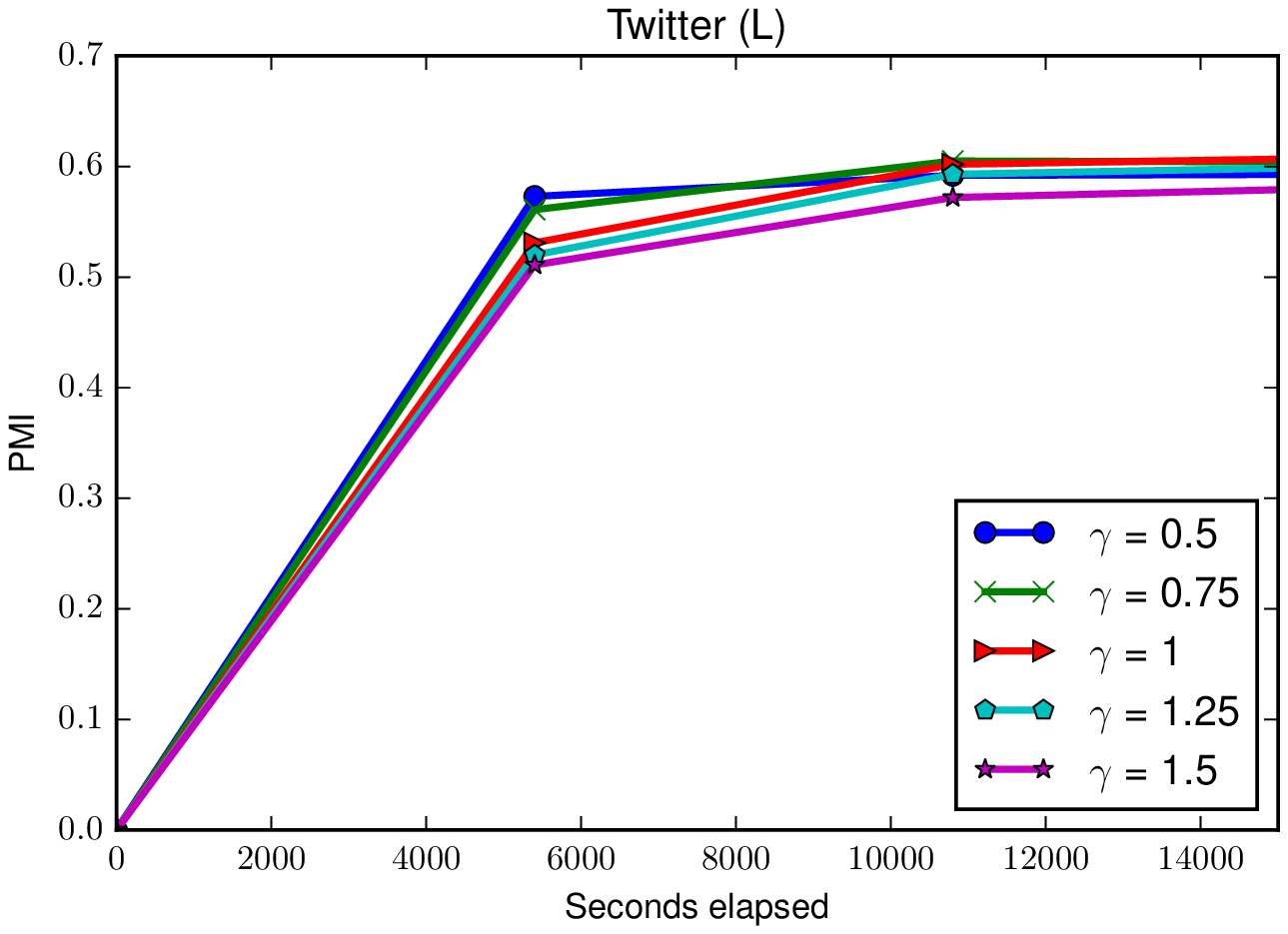}
\includegraphics[width=0.23\textwidth]{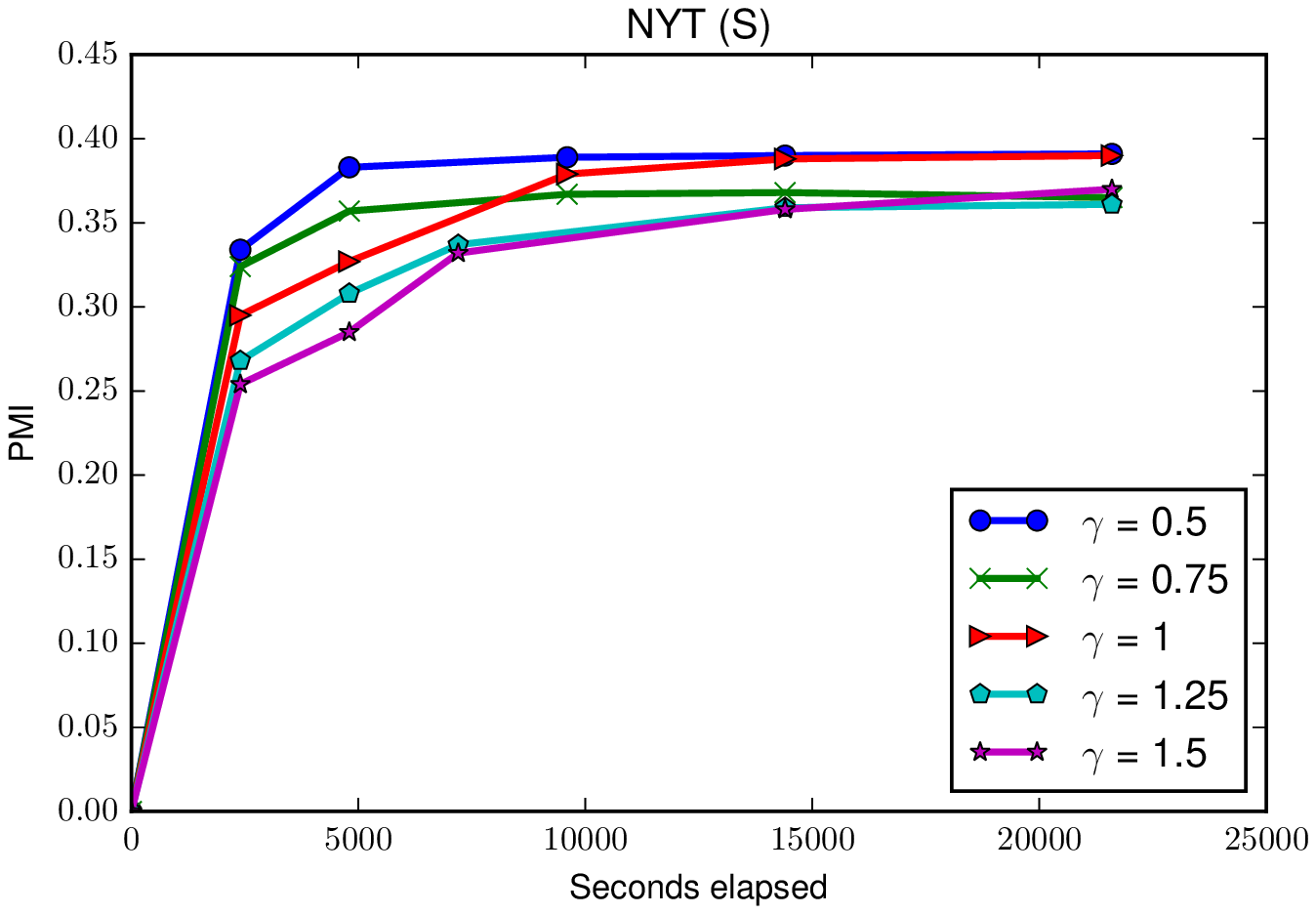}
\includegraphics[width=0.23\textwidth]{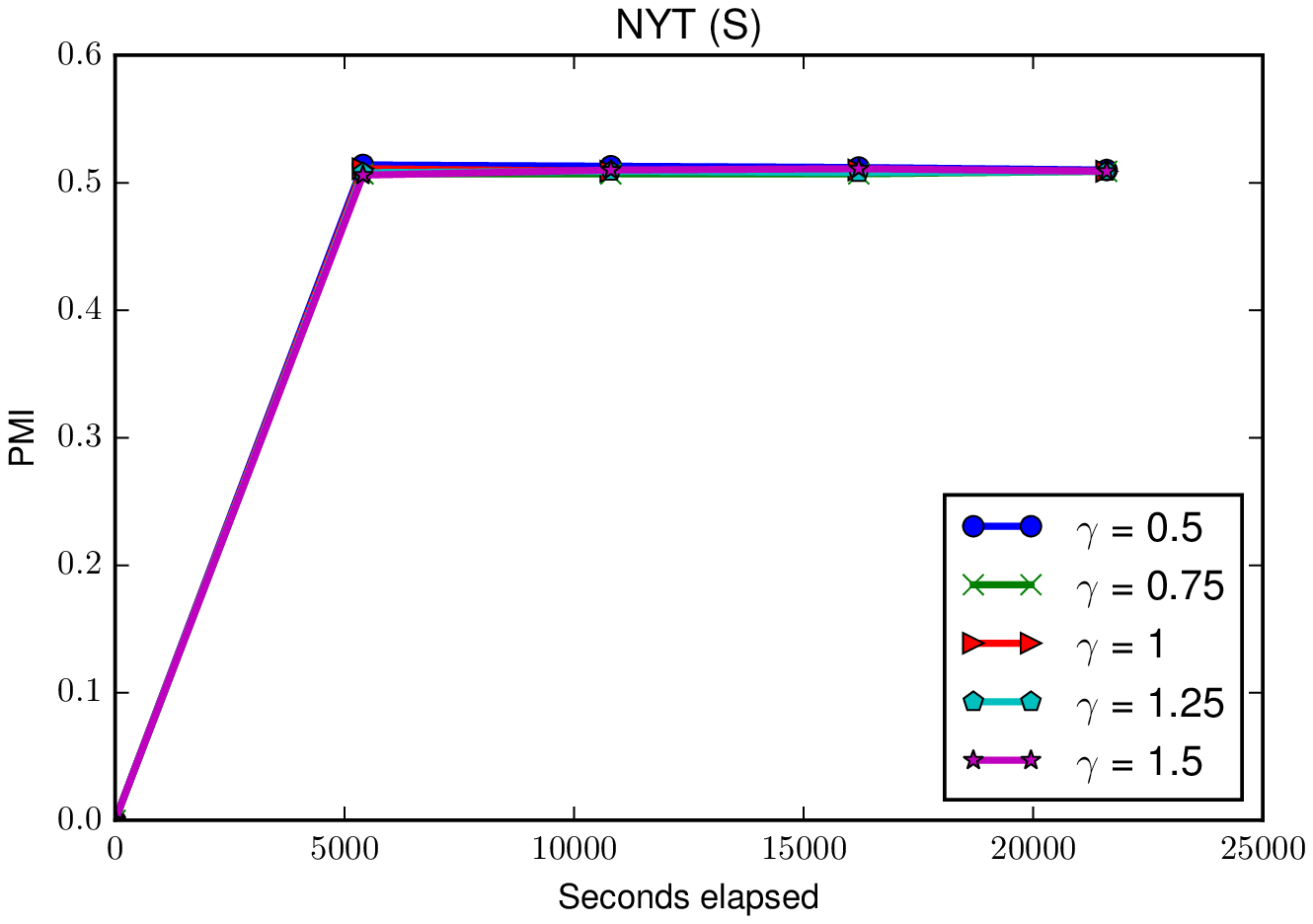}
\includegraphics[width=0.23\textwidth]{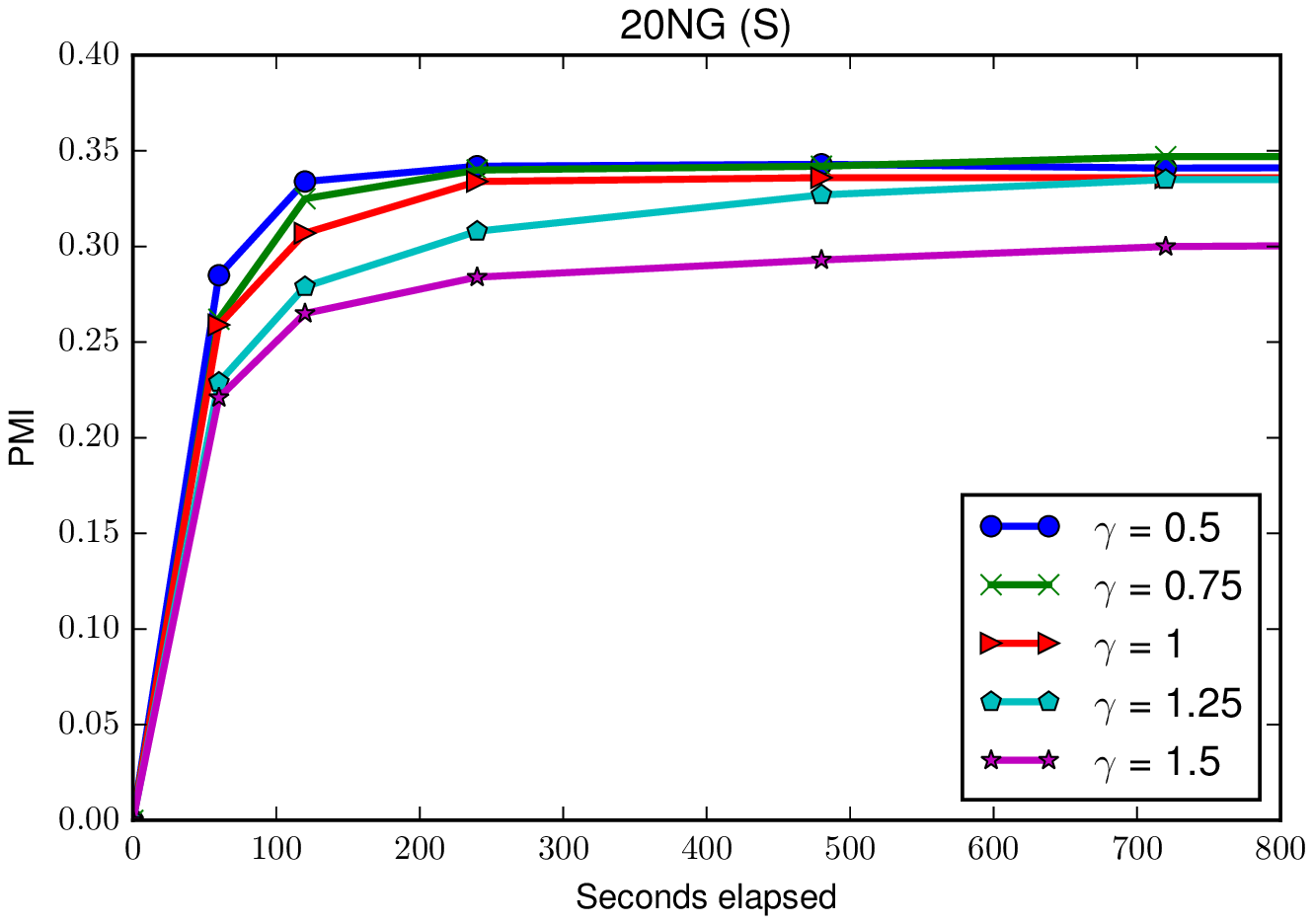}
\includegraphics[width=0.23\textwidth]{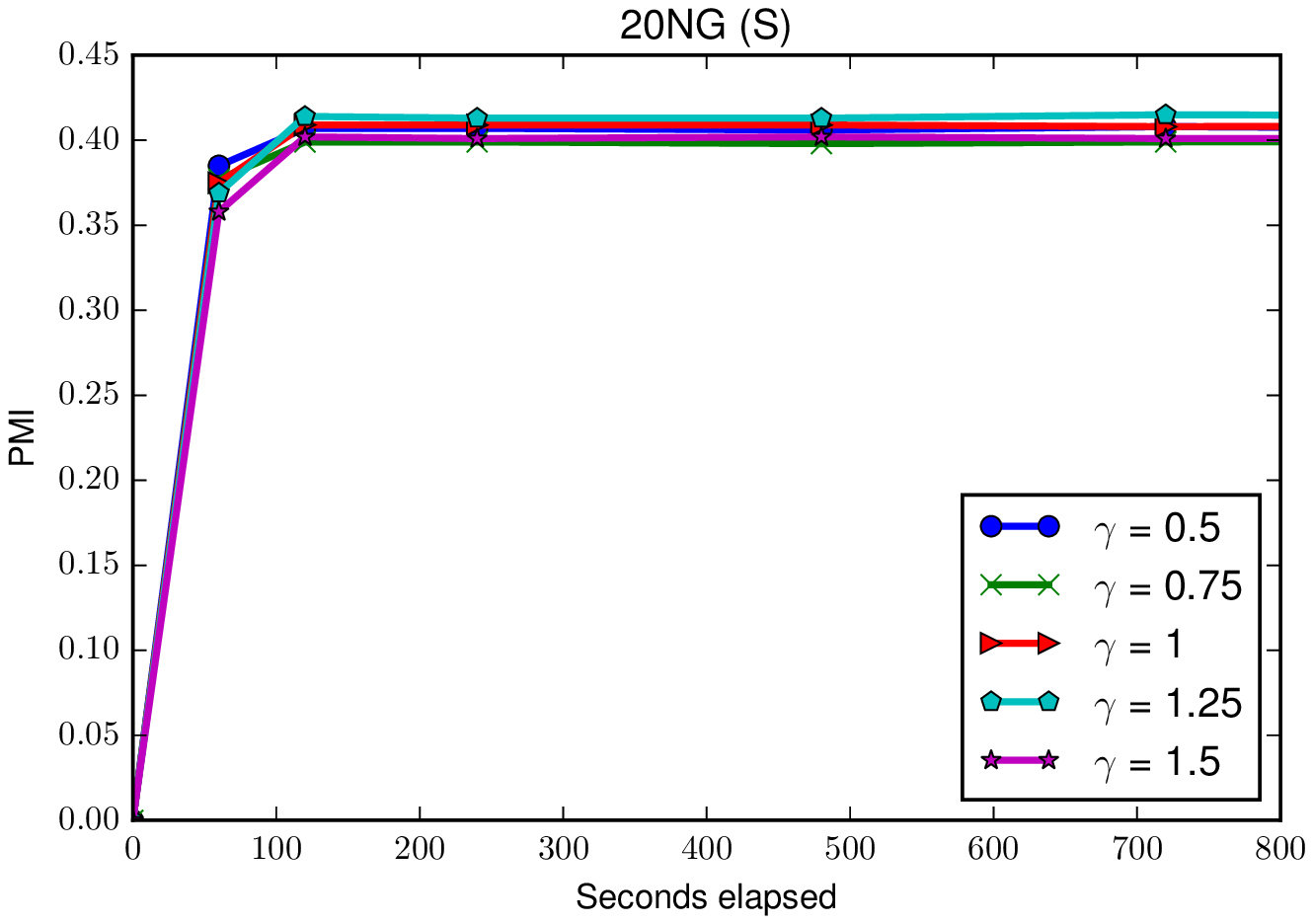}
\caption{Parameter sensitivity w.r.t $\gamma$ on 20NG, NYT and Twitter (L)}\label{fig:parameter-gamma-All}
\end{figure}
\begin{figure}[t]
\includegraphics[width=0.23\textwidth]{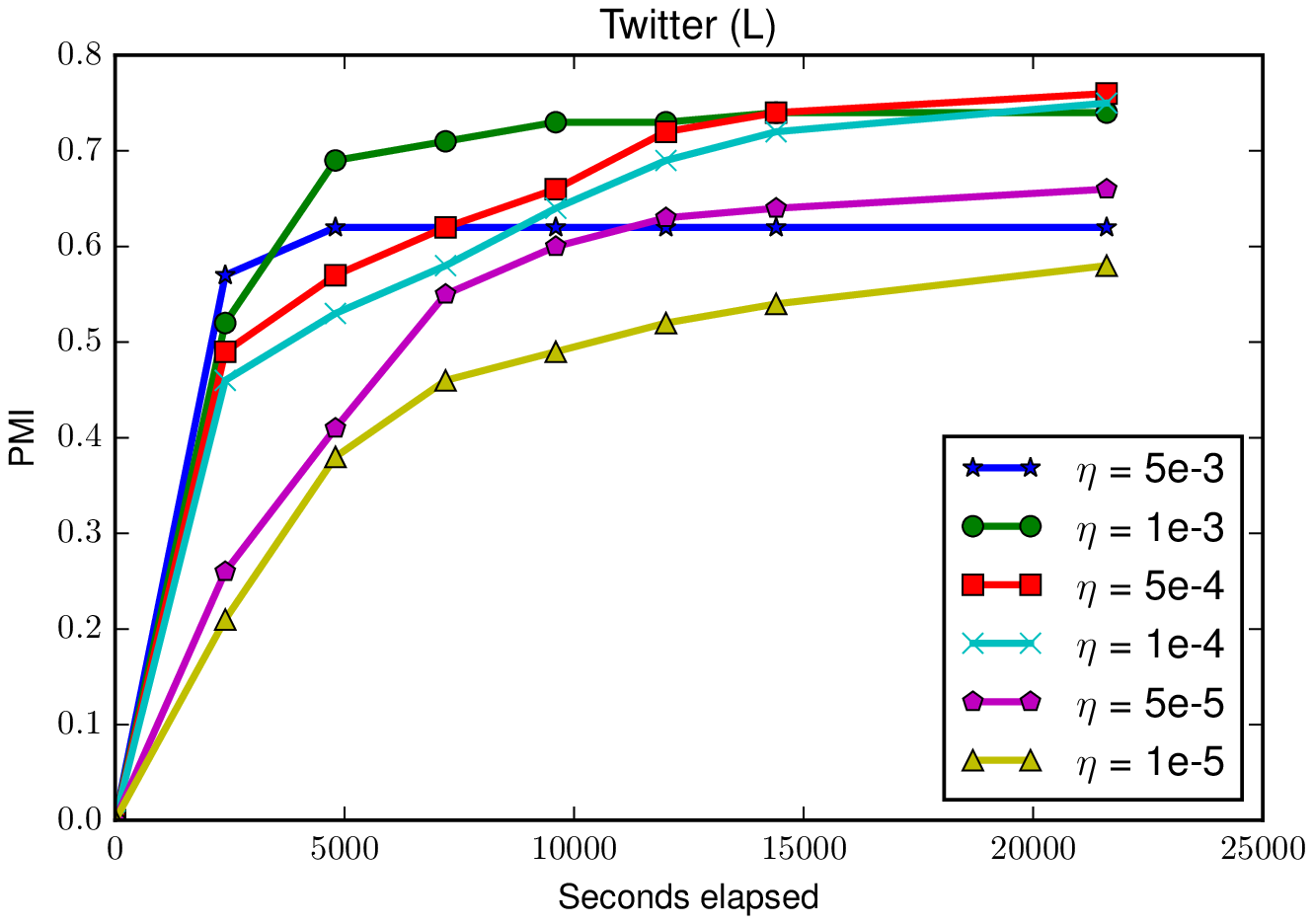}
\includegraphics[width=0.23\textwidth]{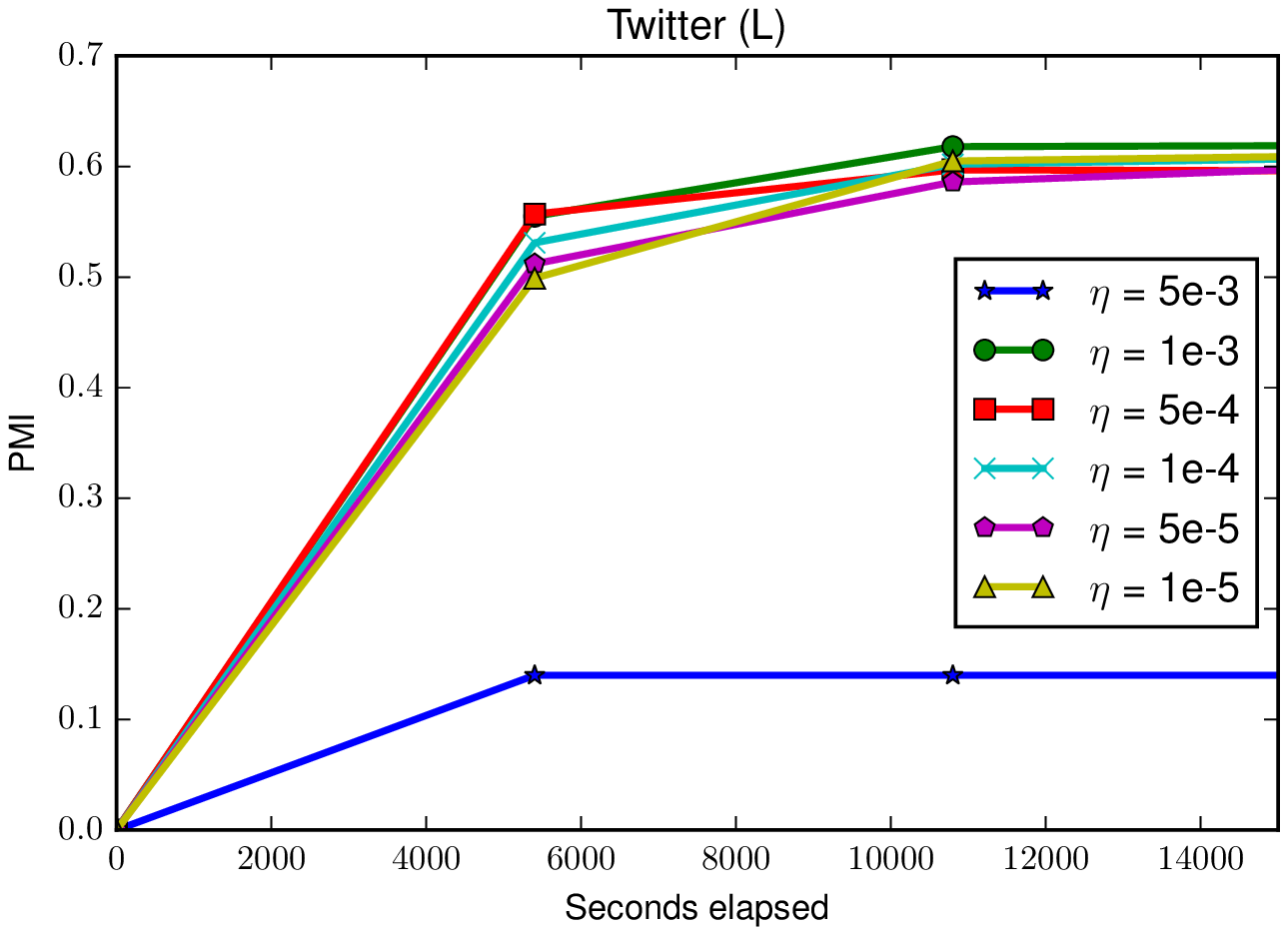}
\includegraphics[width=0.23\textwidth]{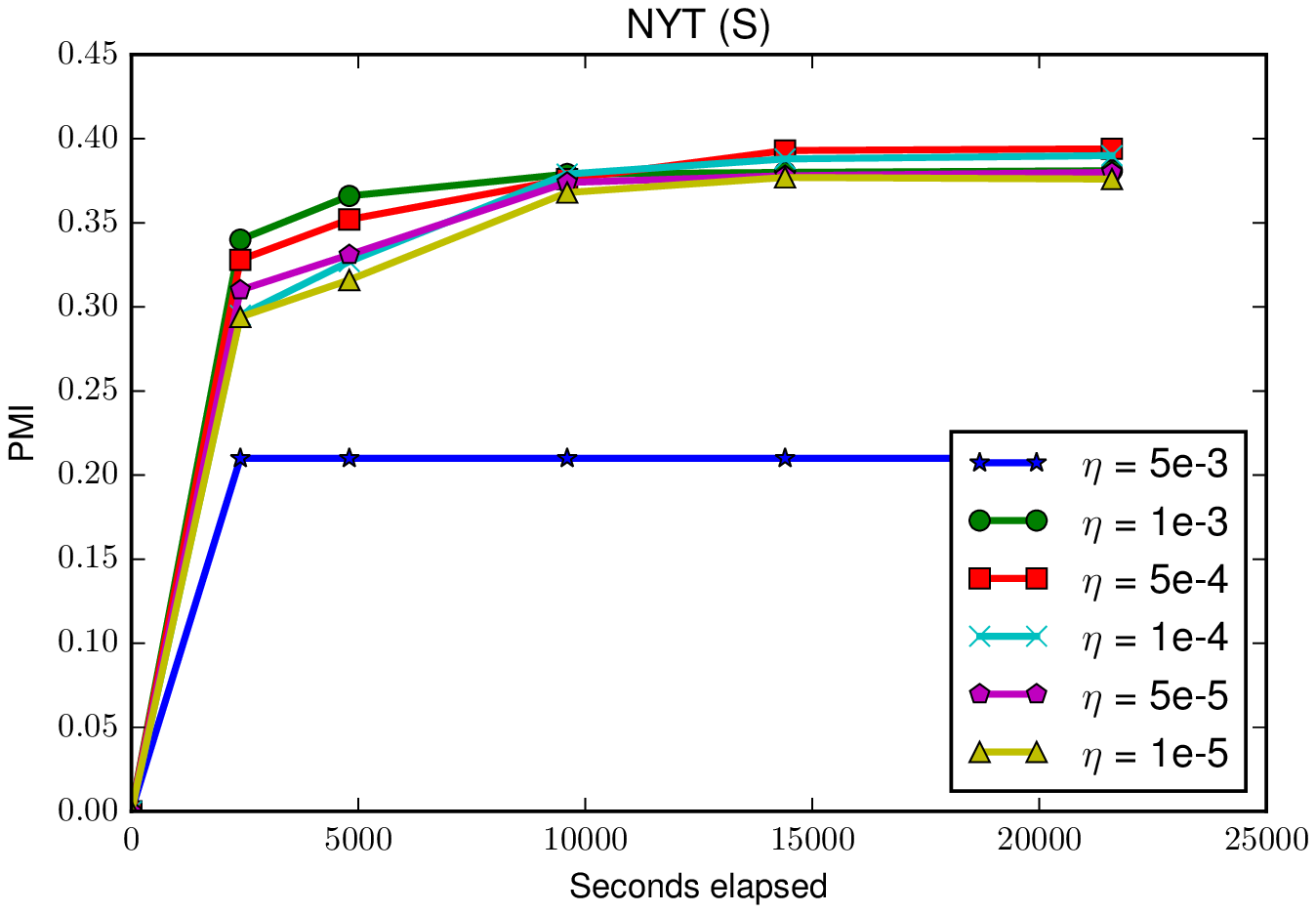}
\includegraphics[width=0.23\textwidth]{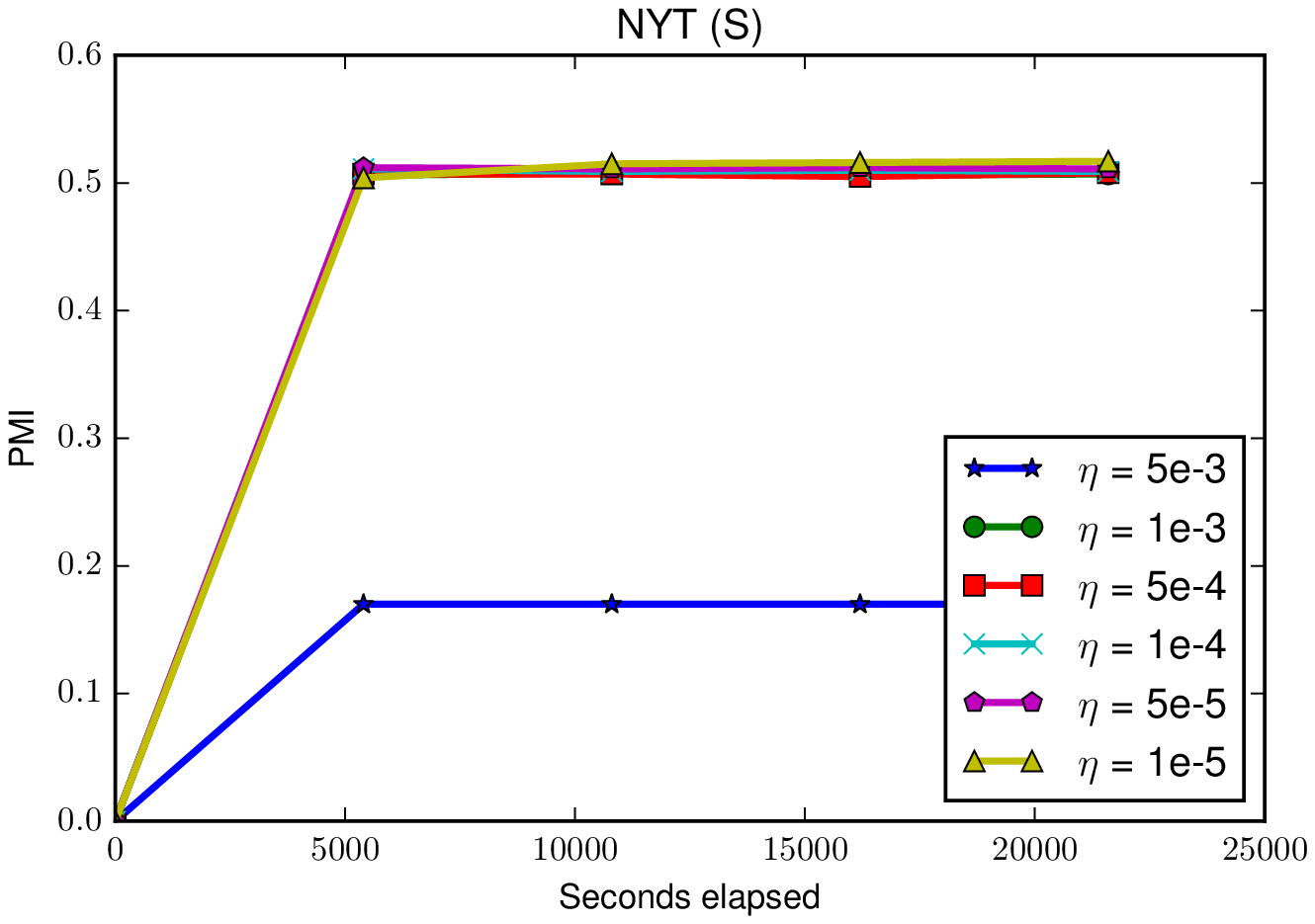}
\includegraphics[width=0.23\textwidth]{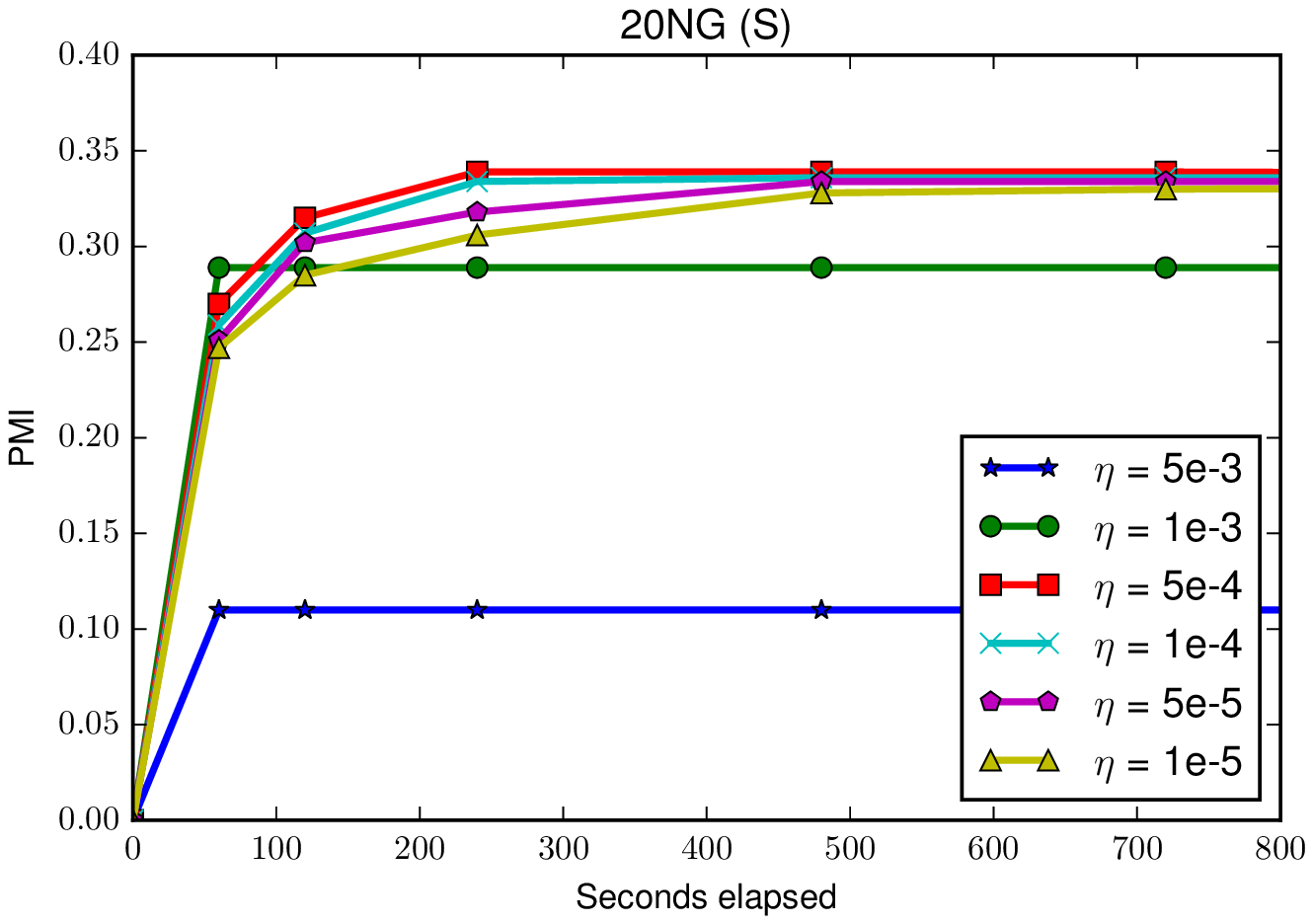}
\includegraphics[width=0.23\textwidth]{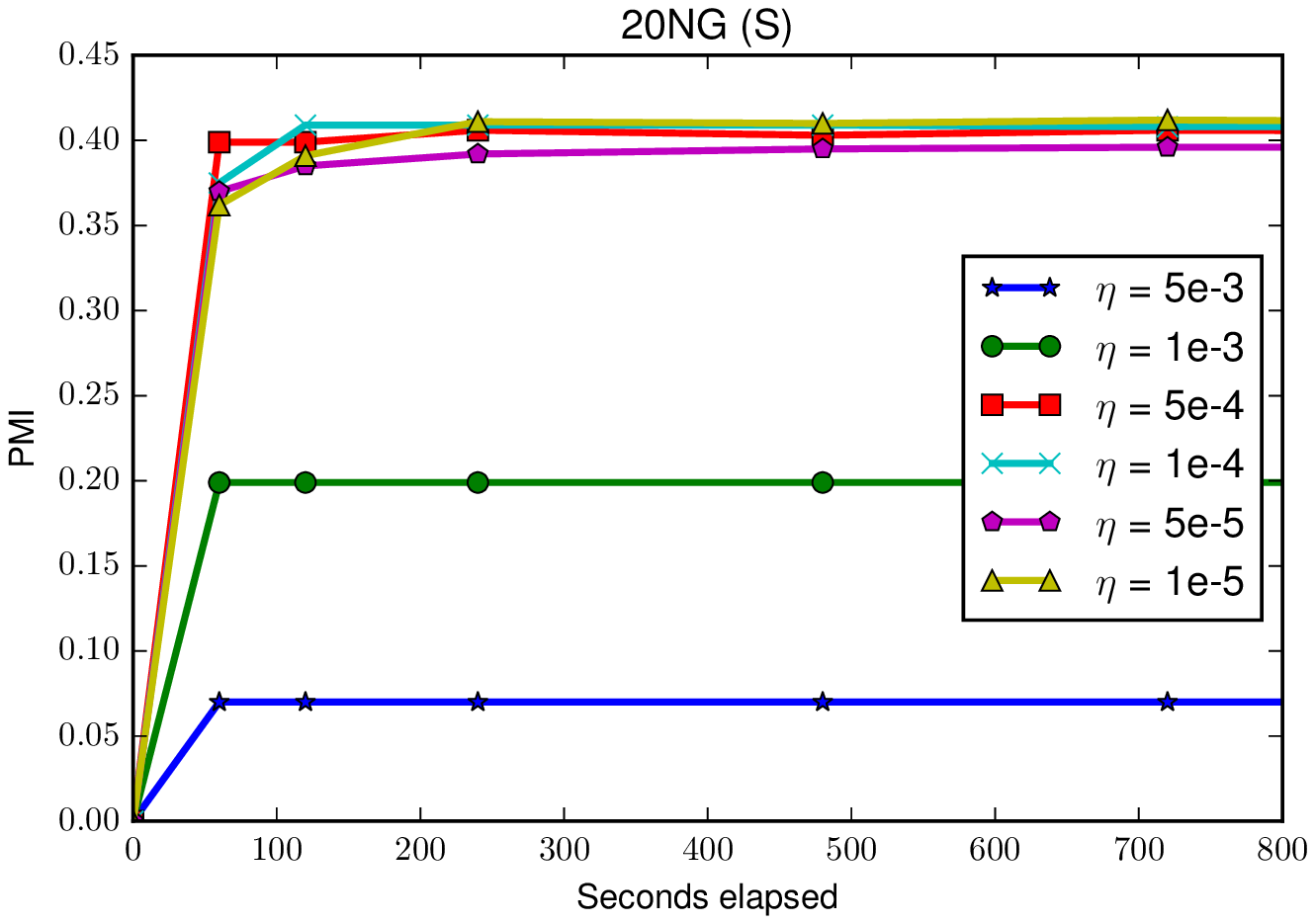}
\caption{Parameter sensitivity w.r.t $\eta$ on 20NG, NYT and Twitter (L)}\label{fig:parameter-eta-All}
\end{figure}

\subsubsection{Topic Interpretation} 
We present some selected topics on NYT in Table~\ref{tab:topic_interpretation_NSMTM} and~\ref{tab:topic_interpretation_NSMDM}. Both methods capture some interesting topics composed of words that are highly correlated. In Table~\ref{tab:topic_interpretation_NSMTM}, the first topic includes Fort Detrick, a US Army Medical Command installation, along with many biological terms and disease names, which consistently refer to biological contents. The second topic is centered around two telecommunication companies -- Lucent and Cisco -- whose ``networking war"  attracted public attention in early 2000. The third topic is reflective of music industry: Billboard is a popular music chart; Ravi Shankar is a famous Indian musician; arranger is a job to create a harmonic combination of different instrumental tracks for a song. In Table~\ref{tab:topic_interpretation_NSMDM}, the first one includes an extensive list of professional baseball terms. All words in the second topic are related to cooking, from ingredients, styles, tools to materials. The third one includes Napster, one of the earlier music streaming services online, UMG, one of the biggest copyright groups in the music industry, and mp3, a common music format  -- it clearly refers to the digital music streaming.
\begin{table}[t]
\caption{Selected Topics Inferred by NSMTM on NYT.} \label{tab:topic_interpretation_NSMTM}
\centering
\begin{tabular}{|c|c|c|} \hline
Biology & Telecomm War & Music Industry \\ \hline \hline
bacteria & stock & album \\
vaccine & Lucent & Billboard \\
germ & NASDAQ & saxophonist \\
bacterial & Cisco System & guitarist \\
cloning & euros & melodies \\
antibodies & DOW & Ravi Shankar \\
genes & analyst & San Francisco ballet \\
organism & capitalization & arranger \\ \hline
\end{tabular}
\end{table}
\begin{table}[t]
\centering
\caption{Selected Topics Inferred by NSMDM on NYT.} \label{tab:topic_interpretation_NSMDM}
\begin{tabular}{|c|c|c|c|c|c|} \hline
Baseball & Cooking & Digital Music Streaming \\ \hline \hline
playoff & tablespoon & user \\
league & teaspoon & Internet \\
baseman & garnish & Napster \\
pitcher & saucepan & consumer \\
season & cloves  & mp3 \\
coach & skillet & download  \\
homer & saute & Universal Music Group \\
defenseman & onion & aol \\ \hline
\end{tabular}
\end{table}

\section{Conclusion}\label{sec:conclusion}
In this paper, we propose two neural models NSMDM and NSMTM, and infer them on large text corpora through a novel inference procedure based on the RW divergence. The proposed approaches can discover the topic sparsity in very large short text corpora, performing better than all existing methods in terms of both the quality of solution and the stability of training. These simple yet effective generative and inference networks are feasible for training and testing on the GPU platform, and enhance the efficiency. 

Experimental results on different genres of large-scale text corpora demonstrate that the proposed approaches consistently achieve \textit{higher PMI score and lower perplexity} than other methods on large-scale collection of short text, and \textit{extract useful topics from about fifty million tweets within only 6 hours} while identifying sparsity in the topical proportion of each tweet. Due to their simplicity and ease-of-implementation, we hope that NSMDM and NSMTM may be helpful for analyzing huge volume of short text which becomes prevalence in the era of social media.



\end{document}